\newtheorem{theorem}{Theorem}
\newcommand{\xmath}[1] {\ensuremath{#1}\xspace}
\newcommand{\blmath}[1] {\xmath{\bm{#1}}}
\newcommand{\Ab}{{\blmath A}}
\newcommand{\Db}{{\blmath D}}
\newcommand{\Ib}{{\blmath I}}
\newcommand{\Lb}{{\blmath L}}
\newcommand{\Mb}{{\blmath M}}
\newcommand{\Pb}{{\blmath P}}
\newcommand{\Rb}{{\blmath R}}
\newcommand{\Sb}{{\blmath S}}
\newcommand{\Ub}{{\blmath U}}
\newcommand{\Wb}{{\blmath W}}
\newcommand{\Xb}{{\blmath X}}
\newcommand{\Yb}{{\blmath Y}}
\newcommand{\ab}{{\blmath a}}
\newcommand{\eb}{{\blmath e}}
\newcommand{\hb}{{\blmath h}}
\newcommand{\pb}{{\blmath p}}
\newcommand{\rb}{{\blmath r}}
\newcommand{\xb}{{\blmath x}}
\newcommand{\yb}{{\blmath y}}
\newcommand{\Lc}{\mathcal{L}}
\newcommand{\Nc}{\mathcal{N}}
\newcommand{\Phib}{{\boldsymbol {\Phi}}}
\newcommand{\Rd}{{\mathbb R}}
\newcommand{\Dc}{{{\mathcal D}}}
\newcommand{\beq}{\begin{equation}}
\newcommand{\eeq}{\end{equation}}
\newcommand{\beqa}{\begin{eqnarray}}
\newcommand{\eeqa}{\end{eqnarray}}
\newcommand{\Lambdab}{\boldsymbol{\Lambda}}
\def\keyFont{\fontsize{8}{11}\helveticabold }
\def\firstAuthorLast{Byung-Hoon Kim and Jong Chul Ye} %use et al only if is more than 1 author
\def\Authors{Byung-Hoon Kim\,$^{1}$ and Jong Chul Ye\,$^{1,*}$}
\begin{document}
\onecolumn
\firstpage{1}

\title[Graph Isomorphism Network rs-fMRI Analysis]{Understanding Graph Isomorphism Network for rs-fMRI Functional Connectivity Analysis}

\author[\firstAuthorLast ]{\Authors} %This field will be automatically populated
\address{} %This field will be automatically populated
\correspondance{} %This field will be automatically populated

\extraAuth{}% If there are more than 1 corresponding author, comment this line and uncomment the next one.
%\extraAuth{corresponding Author2 \\ Laboratory X2, Institute X2, Department X2, Organization X2, Street X2, City X2 , State XX2 (only USA, Canada and Australia), Zip Code2, X2 Country X2, email2@uni2.edu}

\maketitle
\begin{abstract}

%%% Leave the Abstract empty if your article does not require one, please see the Summary Table for full details.
\section{}
Graph neural networks (GNN) rely on graph operations that include neural network training for various graph related tasks.
Recently, several attempts have been made to apply the GNNs to functional magnetic resonance image (fMRI) data.
Despite  recent progresses, a common limitation is its difficulty to explain the classification results in a neuroscientifically explainable way.
Here, we develop a framework for analyzing the fMRI data using the Graph Isomorphism Network (GIN), which was recently proposed as a powerful GNN for graph classification.
One of the important contributions of this paper is the observation that the GIN is a dual representation of convolutional neural network (CNN) in the graph space where the shift operation is defined using the adjacency matrix.
This understanding enables us to exploit CNN-based saliency map techniques for the GNN, which
we tailor to the proposed GIN with one-hot encoding, to visualize the important regions of the brain.
We validate our proposed framework using large-scale resting-state fMRI (rs-fMRI) data for classifying the sex of the subject based on the graph structure of the brain.
The experiment was consistent with our expectation such that the obtained saliency map show high correspondence with previous neuroimaging evidences related to sex differences.

\tiny
 \keyFont{ \section{Keywords:} graph neural networks, saliency mapping, functional neuroimaging, resting-state, explainable artificial intelligence} %All article types: you may provide up to 8 keywords; at least 5 are mandatory.
\end{abstract}

\section{Introduction}
Graphs provide an efficient way to mathematically model non-regular interactions between data in terms of nodes and edges \citep{bassett2009human, he2010graph,sporns2018graph}.
The network of the brain can be modeled as a graph consisting of ROIs as the nodes and their functional connectivity as the edges \citep{bassett2017network}.
In classical graph theoretic approaches, various graph metrics including local/global efficiency, average path length, and small-worldedness, are computed to analyze the brain networks \citep{wang2010graph}.
These metrics could be further used for group comparison to reveal the different network properties, providing insights to the physiological charactersitics and the disorders of the brain \citep{tian2011hemisphere,micheloyannis2006small}.
\par

Recently, there have been remarkable progresses and growing interests in Graph Neural Networks (GNNs), which comprise graph operations performed by deep neural networks (see the extensive survey in \citep{wu2019comprehensive}).
The GNNs are suitable for solving tasks such as node classification, edge prediction, graph classification, etc.
Usual GNNs typically integrate the features at each layer to embed each node features into a predefined next layer feature vector.
The integration process is implemented by choosing appropriate functions for aggregating features of the neighborhood nodes.
As one layer in the GNN aggregates its 1-hop neighbors, each node feature is embedded with features within its $k$-hop neighbors of the graph after $k$ aggregating layers.
The feature of the whole graph is then extracted by applying a readout function to the embedded node features.
\par

Considering the development of GNNs, it is not surprising that there are keen interests in applying GNNs to fMRI data analysis.
For example, some works have applied the GNN to classify one's phenotypic status based on the graph structure of the brain functional networks \citep{ktena2017distance, ktena2018metric, ma2018similarity, li2019graph, li2019graph2}.
Some other works employed the GNN to classify the subjects, not only based on the imaging data, but also including the non-image phenotypic data \citep{parisot2017spectral,parisot2018disease,he2019deep}.
Despite the early contribution of these works in applying the GNNs for fMRI analysis, there exists a common limitation in that they often fail to provide proper mapping of the ROIs for neuroscientific interpretation.
To overcome this limitation, there have been recent attempts to address the issue of neuroscientific interpretability by visualizing the important features of the brain \citep{arslan2018graph,duffy:MIDLAbstract2019a,li2019graph}.
These attempts involved saliency mapping methods of the GNNs, such as class activation mapping (CAM) \citep{zhou2016learning} to delineate the important features, as demonstrated in \citep{arslan2018graph}.
\par

Here we revisit the Graph Isomorphism Network (GIN) \citep{xu2018powerful}, which was recently proposed to implement  Weisfeiler-Lehman (WL) graph isomorphism test  \citep{shervashidze2011weisfeiler} in a neural network.
Our classification results on sex classification confirmed that GIN method can provide more powerful classification performance, but
the direct calculation of the graph saliency map was not clear.

Therefore, another important contribution of this work is to show that
while GIN is similar to spectral-domain approaches such as the graph convolutional network (GCN) in learning
the spectral filters from graphs, GIN can be considered as a dual representation
of the convolutional neural network (CNN) with two-tab convolution filter in the graph space where the adjacency matrix is defined as a generalized shift operation.
With this generalization, we can employ one of the most widely used saliency map visualization technique in CNN, called the gradient-weighted class activation mapping (Grad-CAM) \citep{selvaraju2017grad} that can be applied to any CNN architecture at any layer.
We further found that to visualize the important brain regions that are related to a certain phenotypic difference,
Grad-CAM should be calculated at the input layer and the one-hot encoding of the graph node is ideally suitable
for such saliency map visualization.

Experimental results on sex classification confirm that our method can provide more accurate classification performance and
better interpretability of the classification results in terms of saliency maps, which provide some new insights to the topic of sex differences on the resting-state fMRI (rs-fMRI).
\par

\subsection{Mathematical Preliminaries}
We denote a graph $G=(V,E)$ with a set of vertices $V(G)=\{1,\cdots, N\}$ with $N:=|V|$  and edges $E(G)=\{e_{ij}\}$, where an edge $e_{ij}$ connects vertices $i$ and $j$ if they are adjacent or neighbors.
The set of neighborhoods of a vertex $v$ is denoted by $\Nc(v)$.
For weighted graphs, the edge $e_{ij}$ has a real value.
If $G$ is an unweighted graph, then $E$ is a sparse matrix with elements of either 0 or 1.
\par

When analyzing the fMRI data, the functional connectivity between two regions of the brain is often computed from the Pearson correlation coefficient between the fMRI time series.
Specifically, the Pearson correlation coefficient between the fMRI time series $\yb_i$ at
the vertex $i$ and the fMRI time series $\yb_j$ at the vertex $j$ is given by $$R_{ij} = \frac{\mathrm{Cov}(\yb_i,\yb_j)}{\sigma_{\yb_i}\sigma_{\yb_j}} \in \Rd^{N \times N}$$ where $\mathrm{Cov}(\yb_i,\yb_j)$ is the cross covariance between $\yb_i$ and $\yb_j$, and $\sigma_{\yb_i}$ denotes the standard deviation of $\yb_i$.
Unweighted graph edge can be derived from the functional connectivity by thresholding
the correlation coefficients by  a certain threshold.

For a simple unweighted graph with vertex set $V$, the adjacency matrix is a square $|V| \times |V|$ matrix $\Ab$ such that its element $A_{uv}$ is one when there is an edge from vertex $u$ to vertex $v$, and zero when there is no edge.
For the given adjacency matrix $\Ab$, the graph Laplacian $\Lb$  and its normalized version $\Lb_n$ are then defined by
\begin{align}\label{eq:Laplacian}
\Lb := \Db-\Ab, &\quad \Lb_n = \Ib -\Db^{-\frac{1}{2}}\Ab\Db^{-\frac{1}{2}}
\end{align}
where $\Db$ is the degree matrix with  the diagonal element
\begin{align}\label{eq:degree}
D_{uu}=d(u) = \sum_{v} A_{uv} \ ,
\end{align}
and zeros elsewhere.

Graph Laplacian is useful for signal processing on a graph \citep{shuman2013emerging,ortega2018graph,huang2018graph}.
More specifically, the graph convolution for
real-valued functions on the set of the graph's
vertices, $\xb,\yb : V \mapsto \Rd^{|V|}$ is often defined by
\begin{align}\label{eq:conv}
\xb \ast_G \yb = \Ub \left( \Ub^\top \xb \odot \Ub^\top \yb \right)
\end{align}
where the superscript $^\top$ denotes the adjoint operation,
$\Ub$ is the matrix composed of singular vectors of the normalized graph Laplacian, i.e.
\begin{align}
\Lb_n = \Ub\Lambdab\Ub^\top
\end{align}
where $\Lambdab$ denotes the diagonal matrices with the singular values, which is often
referred to as the graph spectrum.

\subsection{Graph Neural Networks}

The goal of GNNs for the graph classification task is to learn a nonlinear mapping $g$ from a graph to a feature vector:
\begin{align}\label{eq:g}
  g: G \mapsto \pb_{G},
  \end{align}
where $\pb_{G}$ is a feature vector of the whole graph $G$ that helps predicting the labels of the graph.
Recent perspective distinguishes the GNNs into two groups based on the neighborhood aggregating schemes \citep{wu2019comprehensive}.
First group is the spectral-based convolutional GNNs (spectral GNN).
This group of GNNs are inspired by the spectral decomposition of the graphs, and aim to approximate the spectral filters in each aggregating layers \citep{bruna2013spectral, kipf2016semi}.
The other group of GNNs are the spatial-based convolutional GNNs (spatial GNN).
They do not explicitly aim to learn spectral features of graph, but rather implement the neighborhood aggregation based on the nodes' spatial relations.
Some well-known examples of the spatial GNNs are the Message Passing Neural Network (MPNN) \citep{gilmer2017neural} and the GIN \citep{xu2018powerful}.
In this section, we provide a brief review of the these approaches to  understand their relationships.

\par

Spectral GNNs are based on the graph convolution relationship \eqref{eq:conv}, in which
$ \Ub^\top \yb$ is replaced by the parameterized graph spectrum  $\hat\yb := \Ub^\top \yb$:
\begin{align*}
\xb \ast_G \yb = \Ub\left(\hat\yb\odot\Ub^\top\xb\right)
\end{align*}
More specifically, the graph convolutional layer of the spectral GNN is then implemented as follows:
\begin{align}\label{eq:spectralconv}
\xb_i^{(k)}=\sigma\left(\sum\nolimits_{j}\Ub\Yb^{(k)}_{i,j}\Ub^\top\xb_{j}^{(k-1)}\right)
\end{align}
where $\sigma(\cdot)$ is an element-by-element nonlinearity,
  $\xb_i^{(k)}$ is the graph signal at the channel $i$ of $k$-th layer and
$\Yb^{(k)}_{i,j}$ is a diagonal matrix that parameterized the graph spectrum $\hat\yb$ with learnable parameters.

To realize these ideas,
GCN was proposed as the first-order approximation of the spectral GNN \citep{hammond2011wavelets,kipf2016semi}.
Specifically, the authors of \citep{kipf2016semi} showed that the first order-approximation
of the Chebyshev expansion of the spectral convolution operation
can be implemented as the spatial domain convolution:
 \begin{equation}\label{eq:gcn_matrix}
  \Xb^{(k)}=\sigma \Bigl( \tilde{\Db}^{-\frac{1}{2}} \tilde{\Ab} \tilde{\Db}^{-\frac{1}{2}}  \Xb^{(k-1)} \Wb^{(k)} \Bigr) \in \Rd^{N \times C^{(k)}}.
\end{equation}
where $\tilde{\Ab} =  \Ab + \Ib $ is the adjacency matrix assuming the recurring loop,
$\tilde\Db$ is a degree matrix of $\tilde\Ab$,
and
\begin{align}\label{eq:X}
  \Xb^{(k)} &=  \begin{bmatrix} \xb_{1}^{(k)} &\cdots & \xb_{C^{(k)}}^{(k)}  \end{bmatrix}  \in \Rd^{N\times C^{(k)}}
\end{align}
denotes the $C^{(k)}$ channel signals at the
$k$-th layer.
This implies that GCN implements  the node feature with its neighborhoods  by mapping through a layer-specific learnable weight matrix $\Wb^{(k)}$ and nonlinearity $\sigma$.

Unlike the spectral GNN,
spatial-based methods define graph convolutions
based on a node’s spatial relations.
More specifically,  this opertion is
generally composed of the $\mathtt{AGGREGATE}$, and $\mathtt{COMBINE}$ functions:
\begin{equation*}\label{eq:aggregate}
  \ab_{v}^{(k)}=\mathtt{AGGREGATE}^{(k)}\Bigl( \Bigl\{ \pb_{v}^{(k-1)}:u \in \Nc (v) \Bigr\} \Bigr) ,
\end{equation*}
\begin{equation*}\label{eq:combine}
  \pb_{v}^{(k)}=\mathtt{COMBINE}^{(k)} \Bigl( \pb_{v}^{(k-1)}, \ab_{v}^{(k)} \Bigr) ,
\end{equation*}
where $\pb_v^{(k)} \in \Rd^{C^{(k)}}$ denotes the $k$-th layer feature vector at the $v$-th node.
In other words, the $\mathtt{AGGREGATE}$ function collects features of the neighborhood nodes to extract aggregated feature vector $\ab_{v}^{(k)}$ for the layer $k$, and
$\mathtt{COMBINE}$ function then combines the previous node feature $\pb_{v}^{(k-1)}$ with aggregated node features $\ab_{v}^{(k)}$ to output the node feature of the current $k$-th layer $\pb_{v}^{(k)}$.
After this spatial operation, the mapping \eqref{eq:g}
is defined by
\begin{equation*}\label{eq:readout}
  \pb_{G}=\mathtt{READOUT} \Bigl( \Bigl\{ \pb_{v}^{(k)} | v \in G \Bigr\} \Bigr) .
\end{equation*}
Moreover, the $\mathtt{AGGREGATE}$  and $\mathtt{COMBINE}$ share the similar idea
of information propagation/message passing on graphs \citep{wu2019comprehensive}.

In particular, GIN was proposed by \citep{xu2018powerful} as a special case of spatial GNN suitable for graph classification tasks.
The network implements the aggregate and combine functions as the sum of the node features:
\begin{equation}\label{eq:gin}
  \pb_{v}^{(k)}=\mathtt{MLP}^{(k)}\Bigl((1+\epsilon^{(k)}) \cdot \pb_{v}^{(k-1)}+\sum\nolimits_{u \in \Nc(v)}\pb_{u}^{(k-1)}\Bigr) \in \Rd^{C^{(k)}},
\end{equation}
where $\epsilon^{(k)}$ is a learnable parameter, and $\mathtt{MLP}$ is a multi-layer perceptron with nonlinearity.
For graph-level readout,
the embedded node features of every layers are summed up and then concatenated to obtain the final graph feature $\pb_{G}$ as in \citep{xu2018powerful,xu2018representation},
\begin{align}
  \pb_{G}^{(k)} &= \mathtt{sum}(\pb_{0}^{(k)}, \pb_{1}^{(k)}, ..., \pb_{N}^{(k)})  \label{eq:gin_sum}  \\
  \pb_{G} &= \mathtt{concatenate}(\{\pb_{G}^{(k)}\}|k=0,1,...,K) \label{eq:gin_readout}.
\end{align}
The authors of \citep{xu2018powerful} argue that the proposed network architecture can learn injective mapping of the function $g$, which makes the model to be possibly as powerful as the WL test for graph classification tasks \citep{weisfeiler1968reduction,shervashidze2011weisfeiler,xu2018powerful}.

\section{Theory}

In this section, we mathematically show that the GIN is a dual representnation of CNN on the graph space where
the adjacency matrix is defined as a generalized shift operation.
Along with this finding, we further propose a method for applying the GIN to the rs-fMRI data for graph classification and analysis.

\subsection{GIN as a generalized CNN on the graph space}
Note that the GIN processing \eqref{eq:gin} can be decomposed as
\begin{align}
  \pb_{v}^{(k)}&=\mathtt{MLP}^{(k)}(\rb_v^{(k)})~~ \in \Rd^{C^{(k)}},\quad v=1,\cdots, N,
 \end{align}
 where
 \begin{align}
 \rb_v^{(k)} &= c^{(k)} \pb_{v}^{(k-1)}+\sum\nolimits_{u \in \Nc(v)}\pb_{u}^{(k-1)} \label{eq:gin2}  \\
 &=   \underbrace{\begin{bmatrix} \pb_{1}^{(k-1)} &\cdots & \pb_{N}^{(k-1)} \end{bmatrix}}_{\Pb^{(k-1)}} \left(c^{(k)}\Ib+\Ab\right)_{:,v} \in \Rd^{C^{(k-1)}}
\end{align}
where $c^{(k)}:=1+\epsilon^{(k)}$ and
 $\Ab$ is the adjacency matrix
and $\Mb_{:,v}$ denotes the $v$-th column of a matrix $\Mb$.
This operation is performed for $k=1,\cdots, K$.

One of the most important observations is that the
feature matrix  $\Pb^{(k-1)}$ is closely
related to the signal matrix $\Xb^{(k-1)}$ in \eqref{eq:X}.
More specifically, we have the following dual relationship:
\begin{align}\label{eq:equiv}
\Xb^{(k-1)}= \Pb^{(k-1)\top}
\end{align}
 Then, using the observation that $c^{(k)}\Ib+\Ab$ is self-adjoint,
the matrix representation of \eqref{eq:gin2} can be converted to a dual representation:
\begin{align}
  \Xb^{(k)}
  &=\sigma\left( (c^{(k)}\Ib +\Ab)\Xb^{(k-1)}  \Wb^{(k)}\right) \in \Rd^{N \times C^{(k)}}
  \label{eq:gin_matrix}
\end{align}
where
$\Wb^{(k)}\in \Rd^{C^{(k-1)}\times C^{(k)}}$ denotes the fully connected network weight from the MLP.
Eq.~\eqref{eq:gin_matrix} shows that
aside from
the  iteration dependent $\epsilon^{(k)}$,
the main difference of GIN from GCN is the presence of the $(c^{(k)}\Ib +\Ab)$
instead of the normalized adjacency matrix $\tilde\Ab$.
This implies that GIN can be considered as an extension of the GCN as a first order approximation of the spectral GNN using the unnormalized graph Laplacian.

However, another important contribution of this paper is that the difference is not a minor variation,
but that it implies an imporant difference between the two approaches.
More specifically, by exploring the role of $c^{(k)}$ in \eqref{eq:gin_matrix},
Theorem~\ref{thm:gin} shows that \eqref{eq:gin_matrix} is a dual representation
 of the two tab convolutional neural network without pooling layer
on the graph spaces, where the adjacency matrix is defined as a shift operation.

\begin{theorem}\label{thm:gin}
 The GIN iteration in \eqref{eq:gin2} or \eqref{eq:gin_matrix} is a dual representation of a
 CNN without pooling  layers using two-tab filter
 on the  graph space, where the adjacency matrix $\Ab$ is defined as a shift operation.
 \end{theorem}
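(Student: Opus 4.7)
The plan is to establish the correspondence between the matrix form of GIN in \eqref{eq:gin_matrix} and a standard CNN layer by matching, one-by-one, the three ingredients that constitute a CNN layer without pooling: (i) a spatial convolution implemented as a filter of finite support written in the shift-operator algebra, (ii) a channel-mixing map, and (iii) a pointwise nonlinearity. Concretely, I would first argue that the adjacency matrix $\Ab$ plays the role of the \emph{graph shift operator} in the sense of graph signal processing: applying $\Ab$ to a node-indexed signal $\xb$ returns, at each node $v$, the aggregation $(\Ab\xb)_v = \sum_{u\in\Nc(v)} x_u$, which is the direct generalization of a one-step translation on a regular (Euclidean) domain. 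This is the only non-trivial conceptual step, and I expect it to be the main obstacle: it requires invoking the graph-signal-processing convention that fixes $\Ab$ (or equivalently $\Lb$ via \eqref{eq:Laplacian}) as the elementary shift, and checking that it has the algebraic properties one expects of a shift (linearity, locality on $\Nc(v)$, and a well-defined spectral action through the eigendecomposition of $\Lb_n$).

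Once the shift is fixed, the second step is purely algebraic: any degree-$(T{-}1)$ polynomial in a shift operator is, by definition, a convolution with a $T$-tap filter, so $c^{(k)}\Ib + \Ab$ is the graph-space realization of the two-tap filter with coefficients $[c^{(k)},\,1]$, acting independently on each feature channel (column) of $\Xb^{(k-1)}$. The third step is to interpret $\Wb^{(k)} \in \Rd^{C^{(k-1)}\times C^{(k)}}$ as the channel-mixing matrix that follows the spatial convolution — exactly the role played by the $1\times 1$ convolution (equivalently the cross-channel fully-connected map) that sits after every spatial filter in a modern CNN. Combined with the pointwise nonlinearity $\sigma(\cdot)$ and the observation that successive iterations of \eqref{eq:gin_matrix} are composed without any pooling operator in between, these three components together reconstruct exactly a CNN layer with two-tap filters and no pooling.

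Finally, I would close the argument by appealing to the dual relationship in \eqref{eq:equiv}. In the node-column form \eqref{eq:gin2}, the feature matrix $\Pb^{(k-1)}$ is multiplied on the right by the column $(c^{(k)}\Ib+\Ab)_{:,v}$, whereas a standard CNN convolution multiplies a signal matrix on the left by the filter. Using $\Xb^{(k-1)}=\Pb^{(k-1)\top}$ together with the self-adjointness of $c^{(k)}\Ib+\Ab$, a transpose moves between the two representations, which is precisely the sense in which \eqref{eq:gin_matrix} is the \emph{dual} CNN representation on the graph space. Assembling the shift identification, the two-tap polynomial interpretation, the channel-mixing role of $\Wb^{(k)}$, and the transpose duality yields the theorem.
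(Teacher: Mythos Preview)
Your proposal is correct and reaches the same conclusion, but the route differs from the paper's. The paper argues in the opposite direction: it starts from the classical 1-D multi-channel CNN layer \eqref{eq:Conv}, specializes to a two-tap filter $\hb_{i,j}^{(k)}=[c^{(k)}w_{i,j}^{(k)},\,w_{i,j}^{(k)}]^\top$, writes the resulting convolution explicitly using the circulant shift matrix $\Sb$, and arrives at the matrix form $\Xb^{(k)}=\sigma\bigl((c^{(k)}\Ib+\Sb)\Xb^{(k-1)}\Wb^{(k)}\bigr)$; the theorem then follows by a direct syntactic comparison with \eqref{eq:gin_matrix}, the only difference being $\Sb\leftrightarrow\Ab$. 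Your argument instead starts from the GIN side and interprets each factor of \eqref{eq:gin_matrix} in CNN language: $\Ab$ as the graph shift operator, $c^{(k)}\Ib+\Ab$ as a degree-one polynomial in that shift (hence a two-tap graph filter), and $\Wb^{(k)}$ as the $1{\times}1$ channel-mixing convolution. The paper's construction is more elementary and self-contained, requiring no appeal to the graph-signal-processing characterization of filters as shift polynomials; your version is more conceptual, makes the separable spatial/channel structure explicit, and ties the result directly to the GSP literature the paper cites only after the proof. Both are valid, and the transpose/duality step you include at the end is handled in the paper in the paragraph \emph{preceding} the theorem rather than inside the proof.
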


\begin{proof}
To understand this claim, we first revisit the classical CNN for the 1-D signal.
A building block for the CNN is the following multi-channel convolution \citep{ye2019understanding}:
 \begin{align}\label{eq:Conv}
\xb_i^{(k)}&= \sigma\left(\Phib^{\top} \sum_{j=1}^{C^{(k-1)}}\left(\xb_j^{(k-1)}\circledast \hb_{i,j}^{(k)}\right)\right)
\end{align}
where $C^{(k)}$ is the number of channels at the $k$-th layer,
 $\xb_i^{(k)}$ denotes the $i$-th channel signal at the $k$-th layer, and $\hb_{i,j}^{(k)}$ is the
convolution filter that convolves with $j$-th input channel signal to produce $i$-th channel output.
Finally,  $\Phib^\top$ denotes the matrix that represent the pooling operation.

Suppose that the convolution filter $\hb_{i,j}^{(k)}$ has two tabs. Without loss of generality,  the filter
can be represented by
$$\hb_{i,j}^{(k)} = \begin{bmatrix}   c^{(k)} w_{i,j}^{(k)} & w_{i,j}^{(k)} \end{bmatrix}^\top \in \Rd^{2} $$
for some constant $c^{(k)}, w_{i,j}^{(k)}$.
Then,
the convolution operation can be simplified as
\begin{align*}
\xb_j^{(k-1)}\circledast \hb_{i,j}^{(k)} =  c^{(k)} w_{i,j}^{(k)}  \xb_j^{(k-1)} + w_{i,j}^{(k)}\Sb\xb_j^{(k-1)}
\end{align*}
where $\Sb$ is the shift matrix defined by
\begin{align}\label{eq:shift}
\Sb = \begin{bmatrix} 0 & 0 & \cdots & 0 & 1 \\ 1 & 0 & \cdots &  0 & 0 \\ \vdots & \vdots & \ddots & \vdots & \vdots \\
0 & \cdots & \cdots & 0 & 0 \\
0 & \cdots & \cdots & 1 & 0 \end{bmatrix}
\end{align}
if we assume the periodic boundary condition.
Accordingly,  for the cases of a CNN with no pooling layers,  i.e. $\Phib^\top=\Ib$,
\eqref{eq:Conv} with the two-tab filter can be represented in the following matrix form:
\begin{align}\label{eq:cnn_matrix}
\Xb^{(k)}&= \sigma\left(\left( c^{(k)} \Xb^{(k-1)}
+\Sb \Xb^{(k-1)}\right)\Wb^{(k)} \right)
\end{align}
where
\begin{align*}
\Xb^{(k)} &= \begin{bmatrix} \xb_1^{(k)} & \cdots & \xb_{C^{(k)}}^{(k)} \end{bmatrix} \in \Rd^{N\times C^{(k)}}\\
\Wb^{(k)} &=\begin{bmatrix} w_{1,1}^{(k)} & \cdots & w_{C^{(k)},1}^{(k)} \\ \vdots & \ddots & \vdots \\ w_{1,C^{(k-1)}}^{(k)} & \cdots & w_{C^{(k)},C^{(k-1)}}^{(k)} \end{bmatrix} \in \Rd^{C^{(k-1)}\times C^{(k)}}
\end{align*}
By inspection of the dual representation of GIN in \eqref{eq:gin_matrix} and the CNN operation \eqref{eq:cnn_matrix}, we can
see that the only difference of  \eqref{eq:gin_matrix} is the adjacency matrix $\Ab$  instead of the shift matrix $\Sb$ in \eqref{eq:cnn_matrix}.
Therefore, we can conclude that the GIN is a dual representation of CNN with  two tab filter in the graph space where adjacency matrix is defined as a shift operation.
\end{proof}

Note that the identification of the adjacency matrix as a generalized shift operation is not our own invention, but rather it is a classical observation in graph signal processing literature \citep{shuman2013emerging,ortega2018graph,huang2018graph}.
Accordingly, Theorem~\ref{thm:gin} confirms that the insight from the classical signal processing plays an important role in understanding the GNN.
Based on this understanding, we can now provide a dual space insight of the GIN operations in \eqref{eq:gin_sum} and \eqref{eq:gin_readout}.
More specifically, \eqref{eq:gin_sum} can be understand as sum-pooling operation, since we have
\begin{align}
\left(\pb_G^{(k)}\right)^\top = \Phib_{\text{sum}}^\top \Xb^{(k)},&
\end{align}
where the pooling matrix $\Phib_{\text{sum}}^\top$ is given by
\begin{align}\label{eq:sumpool}
\Phib_{\text{sum}}^\top=\begin{bmatrix} 1 & \cdots & 1 \end{bmatrix}.
\end{align}
Then, \eqref{eq:gin_readout} is indeed the multichannel concatenation layer from the
pooled feature at each layer as shown in Figure~\ref{fig:scheme}.
Therefore, the GIN operations can be understood as a  dual representation of  CNN
classifier on the graph signal space where the shift operation is defined by the adjacency matrix.
In fact, CNN and GIN  differs in their definition of the shift operation as shown in
Figure~\ref{fig:scheme} and Figure~\ref{fig:shift}.
We provide an exemplar GIN operation for a more expressive explanation in the Figure~\ref{fig:smallgraph_example} and the Appendix.

\subsection{Saliency Map of GIN}
\label{sec:theory_gin_saliency}

Thanks to the  mathematical understanding of the similarity between the GIN and the CNN, we can now readily use the
saliency map  techniques for the CNNs
to  visualize important brain regions.
For example, \citep{arslan2018graph} used the CAM to visualize the graph saliency map.
Instead, we propose to visualize the salient regions based on the Grad-CAM, which is a generalized version of the CAM without the restriction of the need of the global average pooling layer \citep{selvaraju2017grad}.
Specifically, the Grad-CAM saliency map at the $k$-th layer GIN  can be calculated by
\begin{align}
\mathcal{S}(k)=\sum_{j=1}^N \alpha_j^{(k)} \xb_j^{(k)}
\end{align}
where
\begin{align}
\alpha_j^{(k)} =\sum_{i=1}^N \frac{\partial y}{\partial X_{ij}^{(k)}}
\end{align}
where $X_{ij}^{(k)}$ is the $(i,j)$-th element of $\Xb^{(k)}$ or $i$-th element of $\xb_j^{(k)}$.
Since we are interested in the input node contribution for the classification,
we found that the meaningful Grad-CAM saliency map should be calculated at the input layer, i.e. $k=0$, in which case the final representation becomes
much simpler:
\begin{align}\label{eq:saliency}
\mathcal{S}(0) &=\sum_{j=1}^N \alpha_j^{(0)} \xb_j^{(0)} =  \sum_{j=1}^N \alpha_j^{(0)}\eb_j  \notag\\
&= \begin{bmatrix} \sum\limits_{i=1}^{N} \frac{\partial{y}}{\partial X^{(0)}_{i1}}  & \cdots &
\sum\limits_{i=1}^{N} \frac{\partial{y}}{\partial X^{(0)}_{iN}} \end{bmatrix}^\top \in \Rd^N
\end{align}
where the second equality comes from that  $\xb_j^{(0)}$ is one-hot vector, i.e. $\xb_j^{(0)}=\eb_j$, where $\eb_j$ has one at the $j$-the elements whereas all the other elements are zero,
and the last equality comes from
\begin{align}
 \alpha_j^{(0)}=\sum_{i=1}^N \frac{\partial y}{\partial X_{ij}^{(0)}}
\end{align}
Note that in contrast to
 CAM \citep{zhou2016learning}  as in \citep{arslan2018graph} where sensitivity should be calculated with respect to the last layer,
our approach using Grad-CAM provides a direct link
from the input nodes to the final classification.
Using experimental data,  we will show that the resulting saliency map can quantify the sensitivity with respect to the node geometry, which
provide a  neuroscientific information about the relative importance of the each ROIs related to the class features.

\section{Materials and Methods}

Based on the aforementioned understanding of the GIN, we proceed to apply the GIN to the rs-fMRI data for classification of the subjects' sex
and provide neuroscientific interpretation.
The Figure~\ref{fig:scheme} provides schematic illustration of the proposed analysis pipeline.

\subsection{Data Description and Preprocessing}
The rs-fMRI data was obtained from the Human Connectome Project (HCP) dataset S1200 release \citep{van2013wu}.
The data was acquired for two runs of two resting-state session each for 15 minutes, with eyes open fixating on a cross-hair (TR=720ms, TE=33.1ms, flip angle=$52^\circ$, FOV=$208 \times 180$mm, slice thickness=2.0mm).
Of the total 4 runs, we used the first run of the dataset.
Preprocessing of the fMRI volume time-series included gradient distortion correction, motion correction, and field map preprocessing, followed by registration to T1 weighted image.
The registered EPI image was then normalized to the standard MNI152 space.
Finally, FIX-ICA based denoising was applied to reduce non-neural source of noise in the data \citep{salimi2014automatic,griffanti2014ica}.
Details of the HCP preprocessing pipeline is referred to \citep{glasser2013minimal}. \par

From the preprocessed HCP dataset, rs-fMRI scans of 1094 subjects were obtained from the project.
To further minimize the unwanted effect of head motion on model training, we discarded the subject scans with framewise displacement (FD) over 0.3mm at any time of the scan.
The FD was computed with \texttt{fsl\_motion\_outliers} function of the FSL \citep{jenkinson2012fsl}.
There were 152 discarded scans from filtering out with the FD, and 942 scans were left.
The 942 scans consisted of data from 531 female subjects and 411 male subjects.
We paired each scan with the sex of the corresponding subject as an input-label for training the neural network. \par

\subsection{Graph Construction from Preprocessed Data}
The ROIs are defined from the cortical volume parcellation by \citep{schaefer2017local}.
We used the 400 parcellations as in \citep{kashyap2019individual,weis2019sex}.
Semantic region labels (e.g., Posterior cingulate cortex) and functional network labels (e.g., Default mode) corresponding to every parcels are provided with the dataset \citep{schaefer2017local}.
Vertices are defined as one-hot vectors encoding the semantic region labels of the whole 400 ROIs.
It can be said that no actual signal from the fMRI blood oxygen level dependency (BOLD) activity is represented in the vertex of the constructed graph.\par

To define the edges, functional connectivity matrix was constructed as follows.
First, mean time-series of cortical parcels were obtained by averaging the preprocessed fMRI data voxels within each ROIs.
Functional connectivity is defined as the correlation coefficient of the pearson's correlation between the time-series of the two voxels.
Thus, the connectivity matrix is constructed by computing the pearson's correlation cofficient between every other ROIs.
Derivation of the mean time-series and the connectivity matrix was performed with the MATLAB toolbox GRETNA \citep{wang2015gretna}.
To derive an undirected, unweighted graph from the connectivity matrix, we threshold the connectivity matrix with sparsity by selecting the top $M$-percentile elements of the connectivity matrix as connected, and others unconnected. \par

\subsection{Training details}
\label{sec:training}
All following experiments are conducted with PyTorch 1.4.0.
We used the GIN (Eq.~\eqref{eq:gin}) for our classification experiment.
The concatenated graph features from all $K$ layers $\pb_{G}$ in \eqref{eq:gin_readout} is mapped to the classifier output
$\yb=[y[1],\cdots,y[c]]^\top$
 for predicting the one-hot vector encoded ground-truth label of the graph $\yb_{\text{gt}} =[y_{\text{gt}}[1],\cdots, y_{\text{gt}}[c]]^\top$, where $y_{\text{gt}}[i]\in \{0,1\}$ and $c$ is a set of all possible class labels.
Note that we omit the graph feature from the 0-th layer when concatenating since it is the same one-hot embedding of each pre-defined ROIs which have no difference between the subjects.
One-dimensional batch normalization was applied after each layers of the network followed by the ReLU activation.
The GIN is then trained to minimize the cross-entropy loss $\Lc_\mathtt{xent}$:
\begin{equation}\label{eq:loss_xent}
  \Lc_\mathtt{xent} = -E\left[\sum_{i=1}^{c} y_{\text{gt}}[i] \cdot \log({y[i]})\right]
\end{equation}
where the expectation is taken over the
training data.
For the sex classification in this paper, the classifier is binary, so we use $c=2$.

Deep Graph Infomax (DGI) was introduced in \citep{velivckovic2018deep} as an unsupervised method for the representation learning of the graph.
The DGI learns the node representation by maximizing the mutual information between the node feature vectors $\pb_{v}$ and the corresponding graph feature $\pb_{G}$.
A discriminator $\Dc$ that takes a pair of a node feature vector and a graph feature as input is trained to discriminate whether the two embeddings are from the same graph:
\begin{equation}\label{eq:loss_infomax}
  \Lc_\mathtt{Infomax} = \sum \log\mathcal{D}(\pb_{v}, \pb_{G}) + \sum \log(1-\mathcal{D}(\tilde{\pb_{v}}, \pb_{G})).
\end{equation}
Here, $\tilde{\pb_{v}}$ is a corrupted node feature vector, which is usually obtained by randomly selecting a node feature vector from another sample in the minibatch \citep{velivckovic2018deep}.
The DGI was first proposed as an unsupervised representation learning method, but \citep{li2019graph2} has made use of the DGI as a regularizer for the graph classification task.

Following the work by \citep{li2019graph2}, we added the DGI loss as a regularizer with the expectation that maximizing the mutual information between the node features and the graph features can help extract better representation of the graph.
Thus, the final loss function is defined as:
\begin{equation}\label{eq:loss}
  \Lc = \Lc_\mathtt{xent} + \lambda \cdot \Lc_\mathtt{Infomax},
\end{equation}
where $\Lc_\mathtt{xent}$ is the cross entropy loss  in \eqref{eq:loss_xent} and $\Lc_\mathtt{Infomax}$ is defined in \eqref{eq:loss_infomax}, respectively.
In this paper, we coin the term {\em Infomax regularization} indicating the regularizer $\Lc_\mathtt{Infomax}$.
To train the network, the Adam optimizer was used for 150 epochs of training with the learning rate of 0.01.
Learning rate was decayed by the scale of 0.8 after every 5 epochs of training.
We performed 10-fold cross-validation of the 942 graphs following \citep{varoquaux2017assessing}.
The final model hyperparameters are reported in the section \ref{sec:results_classification} based on the hyperparameter tuning experiments.
\par

\subsection{Comparative Study}

To investigate the optimality of the proposed method, we performed comparative study with other methods.
The first comparative study was performed to ensure the classification capability of our proposed method over other recent ones.
Specifically, we re-implemented and evaluated the performance of the GCN-based method by \citep{arslan2018graph} on our HCP dataset to serve as the baseline.
Additionally, we compared the results of sex classification accuracy on the same HCP dataset reported by \citep{zhang2018functional, weis2019sex}.
Second compararative study was to find the optimal hyperparameter of our proposed method.
We performed several hyperparameter tuning experiments which includes varying the level of sparsity, regularization coefficient $\lambda$, number of layers, number of hidden units, learning rate, and the dropout rate with the same dataset and the same GIN model.
Lastly, we compared the classification performance when the input features were not encoded in one-hot vectors.
Instead of embedding the input feature as a one-hot vector of each parcellation ROIs, we embedded the input features as mean BOLD activation of the ROI or its centroid coordinates \citep{ktena2017distance,ktena2018metric,li2019graph,li2019graph2}, and trained the proposed model with same model hyperparameters.
The centroid coordinates are defined as a three-dimensional vector with each vector element representing the location of the axis R, A, and S.
To exclude the possibility that the difference in classification performance comes from the first layer width of the model, we performed an additional experiment that the embedded centroid coordinate node features are first linearly mapped into the same dimension as in the one-hot encoded case, which is 400.

\subsection{Saliency Mapping}

The proposed saliency mapping was applied for visualizing the brain regions that are related to each class of sexes.
We computed the saliency map using  \eqref{eq:saliency} for each test subject.
To obtain the group-level map, each subject-level saliency map was averaged across all subjects, and then was normalized to the range $[0.0, 1.0]$.
Here, we specifically focus on the regions within the top 5-percentile values, which correspond to top 20 regions of the 400.
To clarify the validity and advantages of our method, we compare the robustness and mapping results with the CAM-based saliency mapping method by \citep{arslan2018graph}.
We evaluate how many top 5-percentile salient regions from only a subset of the subject groups match those from the whole group to demonstrate the robustness of the methods.
Specifically, we compute the ratio of matching top salient regions between the maps derived by aggregating the full fold results and the maps derived from each fold of the cross-validation tests.
Each cross validation fold consisted of around one tenth (n=95 or n=94) of the whole subjects (n=942).
The final robustness is calculated as the average of the matching ratios of the each ten fold maps.
Comparison of the full fold aggregated result and the five fold aggregated result (n=470 or n=472) was additionally done.

\section{Results}\label{sec:results}

\subsection{Classification results}
\label{sec:results_classification}
The classification accuracy, precision, and recall are reported in Table \ref{table:comparison} along with other methods on the same first run of the HCP dataset.
Highest accuracy of 84.61\% was achieved by the proposed method, whereas the baseline GCN-based method achieved 83.98\% accuracy.
Other recent approaches with non GNN-based methods reported the classification performance lower than the baseline.

Results of the experiments to find the optimal hyperparameters of our method are as follows.
We first compared the classification performance given the sparsity 5\%, 10\%, 15\%, 20\%, 30\%, 40\% to find the optimal level of sparsity of the graph edges.
The level of sparsity versus classification accuracy was also tested with the GCN-based baseline method, which showed similar trend to the proposed method with slightly lower accuracy (Figure~\ref{fig:sparsity}).
The best performance was achieved with the sparsity 30\%, so we report the results with sparsity 30\% from here.
Results of the other hyperaparameter tuning experiments, including the regularization cofficient $\lambda$, dropout rate, learning rate, number of layers, and number of hidden units in each layers are summarized in the Table \ref{table:hyperparameter}.
Based on theses hyperparameter experiments, the final GIN model was implemented 5 layers deep with 64 hidden units in each layers.
Dropout was applied at the final linear layer with dropout rate of 0.5 during the training phase, and the regularization cofficient $\lambda$ of \eqref{eq:loss} was set to 0.05.

The last comparative study was on classification performance of different node embeddings.
It was found from the experiments that embedding the node feature as the centroid coordinate or the mean BOLD activity resulted in a significantly lower classification accuracy (Table \ref{table:embedding}).
To evaluate the latent space of the model trained with differently embedded node features, we visualized the latent space of the model with the t-SNE \citep{maaten2008visualizing}, and computed the silhouette score between the two classes \citep{rousseeuw1987silhouettes}.
The silhouette score represents how each subjects are well clustered to its class in the latent space.
The t-SNE visualization of the latent space in Fig.~\ref{fig:tsne} was found to be more linearly separable when trained with one-hot embedded node features, while other embedding methods showed highly entangled latent space.
The mean silhouette score of the test data across the ten-folds was 0.123 with the one-hot node features, while the BOLD mean, centroid coordinate, and the dimension matched centroid coordinate node features resulted in lower scores with 0.007, 0.014, 0.017, respectively.
 \par

\subsection{Saliency mapping}\label{sec:exp:saliency}
First, we demonstrate the robustness of the proposed saliency mapping method.
Experiment on the robustness of the proposed method showed average of 63.5\% and 65.5\% top region match on one fold aggregated saliency maps for female and male classes, respectively (Table~\ref{table:robustness}).
The robustness was higher for five fold aggregated result as expected, showing 92.5\% and 87.5\% top region match.
Significantly lower top region match with high standard deviation was found with the CAM-based saliency mapping method under same conditions.
This was especially notable for the saliency maps of the female class, which showed 46.5\% top region match on the one fold aggregated maps and 70.0\% top region match on the five fold aggregated maps.
\par

Plotted image and the full list of ROIs of the top 5-percentile salient regions from the proposed method are reported in the Figure~\ref{fig:saliency_proposed}, and the Table~\ref{table:saliency_proposed}.
The brain regions shown to be salient to the female class were the left prefrontal cortex (PFC), the right medial PFC, the right orbitofrontal cortex, the right cingulate cortex, the left frontal operculum, the left frontal eye field, the left temporal pole, the left temporal and parietal lobe regions, the bilateral visual cortex, and the bilateral somatomotor area.
The functional networks that these brain regions comprise include all seven networks from the Yeo 7 networks \citep{thomas2011organization}, which are the default mode network (DMN), the saliency/ventral attention network (SVN), the cognitive control network (CCN), the dorsal attention network (DAN), the limbic network (LN), the somatomotor network (SMN), and the visual network (VN). Among the seven networks, regions within the DMN was the most prominent taking up 30\% of the 20 regions, followed by the SMN (25\%), and the SVN (20\%).
Between the two hemispheres, salient regions were dominant in the left hemisphere (65\%) when compared to the right hemisphere (35\%).
\par

For the male class, salient regions were the left PFC, the right medial and lateral PFC, the left orbitofrontal cortex, the bilateral posterior cingulate cortex (PCC), the right precuneus, the bilateral cingulate cortex, the left temporal pole, the right temporal lobe region, the right intraparietal sulcus, the right visual cortex, and the bilateral somatomotor area.
The DMN was also predominant of all the functional networks as in the female class.
While ratio of the dominant networks in the male class showed a similar trend to the female class, the left hemisphere dominance was not present as in the female class (See pie charts of the Figure~\ref{fig:saliency_proposed}).
\par

Next, we explore the saliency mapping result from the CAM-based method \citep{arslan2018graph} and compare it with our method (Figure~\ref{fig:saliency_cam}, Table~\ref{table:saliency_cam}).
From the CAM-based methods, salient regions from both the female and the male class overlapped with our proposed method, including areas such as the PFC, the orbitofrontal cortex, the cingualte cortex, the PCC, the precuneus, and the temporal / parietal lobe regions.
The most notable difference was the absence of the regions from the SMN and the VN in both classes.
There were five functional networks that included the salient regions, the DMN, the SVN, the CCN, the DAN, and the LN.
The dominance ratio of these five functional networks were similar to that found in our proposed saliency mapping results.
In the male class, not only the regions from the SMN and the VN were missing, but also from the SVN, the DAN, and the LN.
The only salient regions in the male class were the left PFC, the right medial/lateral PFC, the left PCC, the left precuneus, the temporal lobe and the parietal lobe regions from the DMN and the CCN.
Hemisphere dominance showed a similar trend to the proposed method in that the female class clearly showed left hemisphere dominance (75\%), while the male class did not show any hemisphere dominance (50\%).

\section{Discussion}
In this study, we proposed a framework for analyzing the fMRI data with the GIN.
The framework suggests on first constructing the graph from the semantic region labels and the functional connectivity between them.
We train a GIN for classifying the subject phenotype based on the whole graph properties.
After training, we can classify the subject with the trained GIN, or visualize the regions related to the classification by backpropagating through the trained GIN.
An important theoretical basis that we found which underlie in this proposed method is that the GIN is not just a black-box operation that aggregates the graph structure with the MLP, but is actually a dual representation of a CNN on the graph space where the adjacency matrix is used
as a generalized shift operator.
\par

Classification of sex based on the rs-fMRI data resulted in the accuracy, precision, and recall of 84.61\%, 86.19\%, and 86.81\%, respectively.
The performance of the classifier is at least comparable, if not outperforming, to other recent methods for classifying sex based on the rs-fMRI data of the HCP dataset \citep{arslan2018graph, zhang2018functional, weis2019sex} (Table \ref{table:comparison}).
Through the comparative studies, we have shown the validity of our proposed method that it can accurately classify the sex of the subjects with the rs-fMRI data.
When training the GNN, adding the Infomax regularization had improved the classification performance of the GIN (Table \ref{table:hyperparameter}).
We have not gone through extensive experiment regarding the role of the Infomax regularization, but suggest to add it when training the neural network based on the results of our experiment.
One interesting finding in our comparative experiments was that embedding the node feature as vectors of centroid coordinate or mean BOLD activity results in a significantly lower classification performance (Table \ref{table:embedding}).
We expect that this comes from the linear dependence of the node features when embedded with centroid coordinate or mean BOLD activity.
Further discussion regarding this topic is covered in the Appendix.
\par

After fully training the GIN for the sex classification task, we could map the salient regions related to the classification by the saliency mapping method.
From the saliency mapping result, we could find that the regions within the DMN takes the most prominent role in classifying both the female and the male subjects.
Importance of the DMN in the sex classification based on rs-fMRI data has been consistently reported \citep{zhang2018functional,weis2019sex}.
In the study by Zhang and colleagues \citep{zhang2018functional}, there were seven features involving the DMN of the top twenty important regions (35\%) for sex classification, which is similar to our result (30\% for the female class and 35\% for the male class).
This importance of the DMN for the sex classification task is known to be related to the difference of the DMN functional connectivity between the two sexes during the resting-state \citep{mak2017default}.
Considering the difference of the DMN between the two sexes, it has been found consistently, and also from the meta-analysis, that the female individuals show stronger functional connectivity of the DMN compared to the males \citep{bluhm2008default, biswal2010toward, allen2011baseline, mak2017default, zhang2018functional}.
CAM-based saliency mapping method also reflected this difference in the DMN between the two sexes and has shown predominance of the DMN in the saliency map, which is replicative of the original CAM-based saliency mapping study by \citep{arslan2018graph}.
These findings suggest the validity of our saliency mapping method that it corresponds to the previous neuroimaging evidences regarding the importance of the DMN in sex classification.
\par

Hemisphere related sex differences are also previously reported \citep{tian2011hemisphere,hjelmervik2014resting}.
The studies indicate that female subjects show higher functional connectivity in the left hemisphere, and male subjects in the right hemisphere \citep{tian2011hemisphere}.
This difference in hemisphere dominance has shown the same trend in our experiment.
In the female class, the the salient regions in the left hemisphere outnumbered the salient regions in the right hemisphere (left 65\% vs. right 35\%), whereas the male class resulted in the right hemisphere lateralized saliency mapping result (left 45\% vs. right 55\%).
The left hemisphere dominance of the female class was also found from the CAM-based saliency mapping results (left 75\% vs. right 25\%), but was not apparent in the male class (left 50\% vs. right 50\%).
We interpret that the hemisphere related sex differences found in our saliency mapping result further supports the validity of our method.
\par

Given the validity of the proposed saliency mapping method, the novel advantages of our method is highlighted by comparing it with the results from the CAM-based method.
We find that the two major advantages over the CAM-based method are the robustness and the mapping sensitivity.
The advantage in robustness is suggested from the experiment result that our proposed method captures more consistent top salient regions than the CAM-based method even with small number of subjects (Table \ref{table:robustness}).
The other advantage, mapping sensitivity, is implied in the saliency mapping results.
Mapping results from our method revealed the involvement of the regions within the SMN and the VN, while the CAM-based method was not able to identify them (Figure \ref{fig:saliency_proposed} and \ref{fig:saliency_cam}).
There are some previous studies noting that there exist difference between the two sexes in terms of the functional connectivity within the SMN and the VN \citep{allen2011baseline, zhang2018functional, xu2015gender}.
However, the evidences supporting this difference in the SMN and the VN are not as prominent and well established as the difference in the DMN between the two sexes.
It can be said that another supportive evidence of the difference of the SMN and the VN between the two sexes is added to the functional neuroimaging field by the proposed saliency mapping method, which would had not been identified by the CAM-based method.
Based on this mapping sensitivity, applying the proposed method other types of classification tasks or to other subject groups is expected to provide new interesting findings to the neuroscientific field.
To sum up, the proposed GIN based rs-fMRI analysis framework achieves state-of-the-art classification performance while providing a robust and sensitive saliency map which can be interpreted to add new insights to the field of functional neuroimaging.

There are some limitations and caveats that needs to be discussed.
First, the demographics that can affect the analysis have not been considered or controlled thoroughly.
It is well known that the resting-state network can be affected by the age, handedness, fluid intelligence, and other subject charactersitics.
The results are expected to have stronger explainability by taking the demographics of the subjects into account in the analysis.
Second, the cutoff threshold for determining the salient region was heuristically set.
We have set the regions with the top 5 percentile values as salient, but the method would have even more validity if the salient regions were determined in a more data-driven way, as in the classical methods perform statistical testing to determine the significance of each voxels.
We have not gone through extensive study on the topic of determining the significant regions from the saliency map, but is worth further studies and discussion. \par

Still, we insist that analyzing the fMRI data based on the GIN has shown its theoretical and experimental validity in this study.
We believe that the GIN based analysis method offers a potential advancement in the area, by opening a way to exploit the capability of the GIN to learn highly non-linear mappings.
Some interesting topics related to this work can be considered.
Theoretically, exploring the operations beyond the two-tab convolution filter by GIN
can potentially provide better performance than the existing GIN.
Neuroscientifically, extension of the method to clinical data interpretation or to the multi-class graph classification problem can be interesting topics in the future.
With enough data assured, the proposed method is expected to help reveal new findings from the functional networks of the brain.

\appendix
\section*{Appendix}
\subsection*{Expressive explanation of the GIN operation}

This section is devoted to explaining the GIN operation \eqref{eq:gin} in a more expressive manner.
For that purpose, we consider a small graph with four nodes ($N=4$) and four edges (Figure \ref{fig:smallgraph_example}).
An example one-hot input node feature matrix $\Xb$ and the adjacency matrix $\Ab$ are defined as following (Figure \ref{fig:smallgraph_example}. A),
\begin{align*}
\Xb^{(0)} &= \begin{bmatrix} 1 & 0 & 0 & 0 \\ 0 & 1 &  0 & 0 \\ 0 & 0 & 1 & 0 \\ 0 & 0 & 0 & 1\end{bmatrix} \\
\Ab &= \begin{bmatrix} 0 & 1 & 1 & 0 \\ 1 & 0 &  1 & 1 \\ 1 & 1 & 1 & 0 \\ 0 & 1 & 0 & 0\end{bmatrix}.
\end{align*}

Next, each node feature is multiplied by $1+\epsilon^{(0)}$, and the neighboring node features are summed, as in the brackets of the \eqref{eq:gin} (Figure \ref{fig:smallgraph_example}. B). In this example, we set the learnable parameter $\epsilon^{(0)}=0.1$ to obtain the aggregated feature matrix
$$
\Rb^{(0)} = \begin{bmatrix} 1.1 & 1 & 1 & 0 \\ 1 & 1.1 &  1 & 0 \\ 1 & 1 & 1.1 & 1 \\ 0 & 0 & 0 & 1.1 \end{bmatrix}.
$$
The aggregated feature matrix $\Rb^{(0)}$ is mapped through the MLP and then the ReLU nonlinearity (Figure \ref{fig:smallgraph_example}. C, D).

Here we set the example MLP weight matrix $\Wb^{(0)}$ as
$$
\Wb^{(0)} = \begin{bmatrix} 0.1 & -0.2 & -0.3 & 0.4 \\ -0.1 & 0.2 & -0.3 & 0.4 \\ 0.4 & 0.3 & 0.2 & -0.1 \\ -0.4 & 0.3 & 0.2 & -0.1 \end{bmatrix}.
$$
to obtain the next layer feature matrix $\Xb^{(1)}$ as
\begin{align*}
\Xb^{(1)} &= \sigma \Bigl( \begin{bmatrix} 0.11 & -0.3 & -0.2 & 0 \\ -0.1 & -0.33 &  0.2 & 0 \\ 0.4 & 0.2 & 0.33 & -0.1 \\ 0 & 0 & 0 & -0.11 \end{bmatrix} \Bigr) \\
&= \begin{bmatrix} 0.11 & 0 & 0 & 0 \\ 0 & 0 &  0.2 & 0 \\ 0.4 & 0.2 & 0.33 & 0 \\ 0 & 0 & 0 & 0 \end{bmatrix}.
\end{align*}

Like the above example process, same operations are applied to the mapped node feature of the $k$-th layer $\Xb^{(k)}$ in each layers of the GIN \eqref{eq:gin}.

\subsection*{Linear dependence of the node features reduces the discriminative power of the GIN}
We demonstrate the importance of embedding the node as a one-hot vector, based on the idea of \citep{xu2018powerful}.
In terms of graph classification tasks, \citep{xu2018powerful} has shown that the GIN can be as discriminative as the Weisfeiler-Lehman (WL) test.
WL test is a test for solving the graph isomorphism problem, where graph isomorphism means whether two seperate graphs are topologically identical.
Conditions for achieving this maximum discriminative power is also provided by \citep{xu2018powerful}, which are that if the $\mathtt{AGGREGATE}$, $\mathtt{COMBINE}$, and $\mathtt{READOUT}$ mappings of the GNN are injective, then the GNN is as powerful as the WL test.
One issue here is that $\mathtt{AGGREGATE}$ is usually implemented as sum operation, so
 if input node feature vectors are not linearly independent, it leads to a non-injective mapping.

For example, assume that two distinct node features are embedded into two vectors $\pb_{1}^{(0)}$ and $\pb_{2}^{(0)}$, respectively.
If $\pb_{1}^{(0)}$ and $\pb_{2}^{(0)}$ are linearly dependent, then there exist positive integers $a$ and $b$ such that
\begin{align} \label{eq:lineardependence}
\pb_{1}^{(0)} = \frac{a}{b} \pb_{2}^{(0)}
\end{align}
Consider one simple graph $G_{1}$ that comprise $b+1$ vertices with all features  as $\pb_{1}^{(0)}$ where there exists edges between the first node and the others, that is
\begin{align*}
  \Xb^{(0)} &= {\underbrace{\begin{bmatrix} \pb_{1}^{(0)} &\pb_{1}^{(0)} &\cdots & \pb_{1}^{(0)} \end{bmatrix}}_{b+1}}^{\top}, \\
  \Ab &= \begin{bmatrix} 0 & 1 & \cdots & 1 & 1 \\ 1 & 0 & \cdots &  0 & 0 \\ \vdots & \vdots & \ddots & \vdots & \vdots \\
  1 & 0 & \cdots & 0 & 0 \\
  1 & 0 & \cdots & 0 & 0 \end{bmatrix}
\end{align*}
Then the node feature vectors of the next layer from \eqref{eq:gin2} is
\begin{align}\label{eq:g1feat}
\rb_{1}^{(1)} &= c^{(1)}\pb_{1}^{(0)} + b\pb_{1}^{(0)} \\
\rb_{v}^{(1)} &= c^{(1)}\pb_{1}^{(0)} + \pb_{1}^{(0)}, \quad v=2,3,\cdots,b+1 \notag
\end{align}

Now, consider another simple graph $G_{2}$ that comprise $a+1$ vertices with a feature $\pb_{1}^{(0)}$, and the other features as  $\pb_{2}^{(0)}$ where there exists edges between the first node and the others, that is
\begin{align*}
  \Xb^{(0)} &= {\underbrace{\begin{bmatrix} \pb_{1}^{(0)} &\pb_{2}^{(0)} &\cdots & \pb_{2}^{(0)} \end{bmatrix}}_{a+1}}^{\top}, \\
  \Ab &= \begin{bmatrix} 0 & 1 & \cdots & 1 & 1 \\ 1 & 0 & \cdots &  0 & 0 \\ \vdots & \vdots & \ddots & \vdots & \vdots \\
  1 & 0 & \cdots & 0 & 0 \\
  1 & 0 & \cdots & 0 & 0 \end{bmatrix}
\end{align*}
Then the node feature vectors of the next layer from \eqref{eq:gin2} is
\begin{align}\label{eq:g2feat}
\rb_{1}^{(1)} &= c^{(1)}\pb_{1}^{(0)} + a\pb_{2}^{(0)} \\
\rb_{v}^{(1)} &= c^{(1)}\pb_{2}^{(0)} + \pb_{2}^{(0)}, \quad v=2,3,\cdots,a+1 \notag
\end{align}

We can now see that the first layer node embeddings of the $G_{1}$ \eqref{eq:g1feat} and the $G_{2}$ \eqref{eq:g2feat} are identical given \eqref{eq:lineardependence},
\begin{align*}
\rb_{1}^{(1)} &= c^{(1)}\pb_{1}^{(0)} + b\pb_{1}^{(0)} \\
&= c^{(1)}\pb_{1}^{(0)} + a\pb_{2}^{(0)}.
\end{align*}
In this case, regardless of the MLP,
 the embedding of the first node of the $G_{1}$ and the $G_{2}$ are not discriminative with the GIN \eqref{eq:gin}.

Thus, it can be said that it is more practical to make the set of input feature vectors linearly independent to each other.
By embedding each separate ROIs into a one-hot vector encoding, it can be ensured that the input features are orthogonal, needless to say linearly independent, to each other. Moreover, one-hot vector encoding leads to a more interpretable Grad-CAM saliency map as in \eqref{eq:saliency}.

\section*{Acknowledgments}
% This is a short text to acknowledge the contributions of specific colleagues, institutions, or agencies that aided the efforts of the authors.
The authors would like to thank Sangmin Lee for his useful comments.

% \section*{Supplemental Data}
%  \href{http://home.frontiersin.org/about/author-guidelines#SupplementaryMaterial}{Supplementary Material} should be uploaded separately on submission, if there are Supplementary Figures, please include the caption in the same file as the figure. LaTeX Supplementary Material templates can be found in the Frontiers LaTeX folder.

\section*{Author Contributions}

BK designed/conducted the experiments, interpreted the neuroscientific findings, and wrote the manuscript. JY supervised the experiments, deduced the theoretical findings, and wrote the manuscript.
% The Author Contributions section is mandatory for all articles, including articles by sole authors. If an appropriate statement is not provided on submission, a standard one will be inserted during the production process. The Author Contributions statement must describe the contributions of individual authors referred to by their initials and, in doing so, all authors agree to be accountable for the content of the work. Please see  \href{http://home.frontiersin.org/about/author-guidelines#AuthorandContributors}{here} for full authorship criteria.
%
% \section*{Funding}
% Details of all funding sources should be provided, including grant numbers if applicable. Please ensure to add all necessary funding information, as after publication this is no longer possible.

\section*{Conflict of Interest Statement}
%All financial, commercial or other relationships that might be perceived by the academic community as representing a potential conflict of interest must be disclosed. If no such relationship exists, authors will be asked to confirm the following statement:

The authors declare that the research was conducted in the absence of any commercial or financial relationships that could be construed as a potential conflict of interest.

\section*{Data Availability Statement}
The code and datasets for this study can be found in \url{https://github.com/egyptdj/graph-neural-mapping}.
% The code and datasets for this study are available on request to the authors.
% Please see the availability of data guidelines for more information, at https://www.frontiersin.org/about/author-guidelines#AvailabilityofData

\bibliographystyle{frontiersinSCNS_ENG_HUMS} % for Science, Engineering and Humanities and Social Sciences articles, for Humanities and Social Sciences articles please include page numbers in the in-text citations

\begin{thebibliography}{54}
\providecommand{\natexlab}[1]{#1}
\expandafter\ifx\csname urlstyle\endcsname\relax
  \providecommand{\doi}[1]{doi:\discretionary{}{}{}#1}\else
  \providecommand{\doi}{doi:\discretionary{}{}{}\begingroup
  \urlstyle{rm}\Url}\fi
\providecommand{\selectlanguage}[1]{\relax}
\providecommand{\bibAnnoteFile}[1]{%
  \IfFileExists{#1}{\begin{quotation}\noindent\textsc{Key:} #1\\
  \textsc{Annotation:}\ \input{#1}\end{quotation}}{}}
\providecommand{\bibAnnote}[2]{%
  \begin{quotation}\noindent\textsc{Key:} #1\\
  \textsc{Annotation:}\ #2\end{quotation}}

\bibitem[{Allen et~al.(2011)Allen, Erhardt, Damaraju, Gruner, Segall, Silva
  et~al.}]{allen2011baseline}
Allen, E.~A., Erhardt, E.~B., Damaraju, E., Gruner, W., Segall, J.~M., Silva,
  R.~F., et~al. (2011).
\newblock A baseline for the multivariate comparison of resting-state networks.
\newblock \emph{Frontiers in systems neuroscience} 5, 2
\bibAnnoteFile{allen2011baseline}

\bibitem[{Arslan et~al.(2018)Arslan, Ktena, Glocker, and
  Rueckert}]{arslan2018graph}
Arslan, S., Ktena, S.~I., Glocker, B., and Rueckert, D. (2018).
\newblock Graph saliency maps through spectral convolutional networks:
  Application to sex classification with brain connectivity.
\newblock In \emph{Graphs in Biomedical Image Analysis and Integrating Medical
  Imaging and Non-Imaging Modalities} (Springer). 3--13
\bibAnnoteFile{arslan2018graph}

\bibitem[{Bassett and Bullmore(2009)}]{bassett2009human}
Bassett, D.~S. and Bullmore, E.~T. (2009).
\newblock Human brain networks in health and disease.
\newblock \emph{Current opinion in neurology} 22, 340
\bibAnnoteFile{bassett2009human}

\bibitem[{Bassett and Sporns(2017)}]{bassett2017network}
Bassett, D.~S. and Sporns, O. (2017).
\newblock Network neuroscience.
\newblock \emph{Nature neuroscience} 20, 353
\bibAnnoteFile{bassett2017network}

\bibitem[{Biswal et~al.(2010)Biswal, Mennes, Zuo, Gohel, Kelly, Smith
  et~al.}]{biswal2010toward}
Biswal, B.~B., Mennes, M., Zuo, X.-N., Gohel, S., Kelly, C., Smith, S.~M.,
  et~al. (2010).
\newblock Toward discovery science of human brain function.
\newblock \emph{Proceedings of the National Academy of Sciences} 107,
  4734--4739
\bibAnnoteFile{biswal2010toward}

\bibitem[{Bluhm et~al.(2008)Bluhm, Osuch, Lanius, Boksman, Neufeld,
  Th{\'e}berge et~al.}]{bluhm2008default}
Bluhm, R.~L., Osuch, E.~A., Lanius, R.~A., Boksman, K., Neufeld, R.~W.,
  Th{\'e}berge, J., et~al. (2008).
\newblock Default mode network connectivity: effects of age, sex, and analytic
  approach.
\newblock \emph{Neuroreport} 19, 887--891
\bibAnnoteFile{bluhm2008default}

\bibitem[{Bruna et~al.(2013)Bruna, Zaremba, Szlam, and
  LeCun}]{bruna2013spectral}
Bruna, J., Zaremba, W., Szlam, A., and LeCun, Y. (2013).
\newblock Spectral networks and locally connected networks on graphs.
\newblock \emph{arXiv preprint arXiv:1312.6203}
\bibAnnoteFile{bruna2013spectral}

\bibitem[{Duffy et~al.(2019)Duffy, Liu, Flynn, Toga, Barkovich, Xu
  et~al.}]{duffy:MIDLAbstract2019a}
Duffy, B.~A., Liu, M., Flynn, T., Toga, A., Barkovich, A.~J., Xu, D., et~al.
  (2019).
\newblock Regression activation mapping on the cortical surface using graph
  convolutional networks.
\newblock In \emph{International Conference on Medical Imaging with Deep
  Learning -- Extended Abstract Track} (London, United Kingdom)
\bibAnnoteFile{duffy:MIDLAbstract2019a}

\bibitem[{Gilmer et~al.(2017)Gilmer, Schoenholz, Riley, Vinyals, and
  Dahl}]{gilmer2017neural}
Gilmer, J., Schoenholz, S.~S., Riley, P.~F., Vinyals, O., and Dahl, G.~E.
  (2017).
\newblock Neural message passing for quantum chemistry.
\newblock In \emph{Proceedings of the 34th International Conference on Machine
  Learning-Volume 70} (JMLR. org), 1263--1272
\bibAnnoteFile{gilmer2017neural}

\bibitem[{Glasser et~al.(2013)Glasser, Sotiropoulos, Wilson, Coalson, Fischl,
  Andersson et~al.}]{glasser2013minimal}
Glasser, M.~F., Sotiropoulos, S.~N., Wilson, J.~A., Coalson, T.~S., Fischl, B.,
  Andersson, J.~L., et~al. (2013).
\newblock The minimal preprocessing pipelines for the human connectome project.
\newblock \emph{Neuroimage} 80, 105--124
\bibAnnoteFile{glasser2013minimal}

\bibitem[{Griffanti et~al.(2014)Griffanti, Salimi-Khorshidi, Beckmann,
  Auerbach, Douaud, Sexton et~al.}]{griffanti2014ica}
Griffanti, L., Salimi-Khorshidi, G., Beckmann, C.~F., Auerbach, E.~J., Douaud,
  G., Sexton, C.~E., et~al. (2014).
\newblock Ica-based artefact removal and accelerated fmri acquisition for
  improved resting state network imaging.
\newblock \emph{Neuroimage} 95, 232--247
\bibAnnoteFile{griffanti2014ica}

\bibitem[{Hammond et~al.(2011)Hammond, Vandergheynst, and
  Gribonval}]{hammond2011wavelets}
Hammond, D.~K., Vandergheynst, P., and Gribonval, R. (2011).
\newblock Wavelets on graphs via spectral graph theory.
\newblock \emph{Applied and Computational Harmonic Analysis} 30, 129--150
\bibAnnoteFile{hammond2011wavelets}

\bibitem[{He et~al.(2019)He, Kong, Holmes, Nguyen, Sabuncu, Eickhoff
  et~al.}]{he2019deep}
He, T., Kong, R., Holmes, A.~J., Nguyen, M., Sabuncu, M.~R., Eickhoff, S.~B.,
  et~al. (2019).
\newblock Deep neural networks and kernel regression achieve comparable
  accuracies for functional connectivity prediction of behavior and
  demographics.
\newblock \emph{NeuroImage} , 116276
\bibAnnoteFile{he2019deep}

\bibitem[{He and Evans(2010)}]{he2010graph}
He, Y. and Evans, A. (2010).
\newblock Graph theoretical modeling of brain connectivity.
\newblock \emph{Current opinion in neurology} 23, 341--350
\bibAnnoteFile{he2010graph}

\bibitem[{Hjelmervik et~al.(2014)Hjelmervik, Hausmann, Osnes, Westerhausen, and
  Specht}]{hjelmervik2014resting}
Hjelmervik, H., Hausmann, M., Osnes, B., Westerhausen, R., and Specht, K.
  (2014).
\newblock Resting states are resting traits--an fmri study of sex differences
  and menstrual cycle effects in resting state cognitive control networks.
\newblock \emph{PloS one} 9, e103492
\bibAnnoteFile{hjelmervik2014resting}

\bibitem[{Huang et~al.(2018)Huang, Bolton, Medaglia, Bassett, Ribeiro, and Van
  De~Ville}]{huang2018graph}
Huang, W., Bolton, T.~A., Medaglia, J.~D., Bassett, D.~S., Ribeiro, A., and Van
  De~Ville, D. (2018).
\newblock A graph signal processing perspective on functional brain imaging.
\newblock \emph{Proceedings of the IEEE} 106, 868--885
\bibAnnoteFile{huang2018graph}

\bibitem[{Jenkinson et~al.(2012)Jenkinson, Beckmann, Behrens, Woolrich, and
  Smith}]{jenkinson2012fsl}
Jenkinson, M., Beckmann, C.~F., Behrens, T.~E., Woolrich, M.~W., and Smith,
  S.~M. (2012).
\newblock Fsl.
\newblock \emph{Neuroimage} 62, 782--790
\bibAnnoteFile{jenkinson2012fsl}

\bibitem[{Kashyap et~al.(2019)Kashyap, Kong, Bhattacharjee, Li, Zhou, and
  Yeo}]{kashyap2019individual}
Kashyap, R., Kong, R., Bhattacharjee, S., Li, J., Zhou, J., and Yeo, B.~T.
  (2019).
\newblock Individual-specific fmri-subspaces improve functional connectivity
  prediction of behavior.
\newblock \emph{NeuroImage} 189, 804--812
\bibAnnoteFile{kashyap2019individual}

\bibitem[{Kim and Ye(2020)}]{kim2020understanding}
Kim, B.-H. and Ye, J.~C. (2020).
\newblock Understanding graph isomorphism network for brain mr functional
  connectivity analysis.
\newblock \emph{arXiv preprint arXiv:2001.03690}
\bibAnnoteFile{kim2020understanding}

\bibitem[{Kipf and Welling(2016)}]{kipf2016semi}
Kipf, T.~N. and Welling, M. (2016).
\newblock Semi-supervised classification with graph convolutional networks.
\newblock \emph{arXiv preprint arXiv:1609.02907}
\bibAnnoteFile{kipf2016semi}

\bibitem[{Ktena et~al.(2017)Ktena, Parisot, Ferrante, Rajchl, Lee, Glocker
  et~al.}]{ktena2017distance}
Ktena, S.~I., Parisot, S., Ferrante, E., Rajchl, M., Lee, M., Glocker, B.,
  et~al. (2017).
\newblock Distance metric learning using graph convolutional networks:
  Application to functional brain networks.
\newblock In \emph{International Conference on Medical Image Computing and
  Computer-Assisted Intervention} (Springer), 469--477
\bibAnnoteFile{ktena2017distance}

\bibitem[{Ktena et~al.(2018)Ktena, Parisot, Ferrante, Rajchl, Lee, Glocker
  et~al.}]{ktena2018metric}
Ktena, S.~I., Parisot, S., Ferrante, E., Rajchl, M., Lee, M., Glocker, B.,
  et~al. (2018).
\newblock Metric learning with spectral graph convolutions on brain
  connectivity networks.
\newblock \emph{NeuroImage} 169, 431--442
\bibAnnoteFile{ktena2018metric}

\bibitem[{Li et~al.(2019{\natexlab{a}})Li, Dvornek, Zhou, Zhuang, Ventola, and
  Duncan}]{li2019graph}
Li, X., Dvornek, N.~C., Zhou, Y., Zhuang, J., Ventola, P., and Duncan, J.~S.
  (2019{\natexlab{a}}).
\newblock Graph neural network for interpreting task-fmri biomarkers.
\newblock In \emph{International Conference on Medical Image Computing and
  Computer-Assisted Intervention} (Springer), 485--493
\bibAnnoteFile{li2019graph}

\bibitem[{Li et~al.(2019{\natexlab{b}})Li, Dvornek, Zhuang, Ventola, and
  Duncana}]{li2019graph2}
Li, X., Dvornek, N.~C., Zhuang, J., Ventola, P., and Duncana, J.
  (2019{\natexlab{b}}).
\newblock Graph embedding using infomax for asd classification and brain
  functional difference detection.
\newblock \emph{arXiv preprint arXiv:1908.04769}
\bibAnnoteFile{li2019graph2}

\bibitem[{Ma et~al.(2018)Ma, Ahmed, Willke, Sengupta, Cole, Turk-Browne
  et~al.}]{ma2018similarity}
Ma, G., Ahmed, N.~K., Willke, T., Sengupta, D., Cole, M.~W., Turk-Browne, N.,
  et~al. (2018).
\newblock Similarity learning with higher-order proximity for brain network
  analysis.
\newblock \emph{arXiv preprint arXiv:1811.02662}
\bibAnnoteFile{ma2018similarity}

\bibitem[{Maaten and Hinton(2008)}]{maaten2008visualizing}
Maaten, L. v.~d. and Hinton, G. (2008).
\newblock Visualizing data using t-sne.
\newblock \emph{Journal of machine learning research} 9, 2579--2605
\bibAnnoteFile{maaten2008visualizing}

\bibitem[{Mak et~al.(2017)Mak, Minuzzi, MacQueen, Hall, Kennedy, and
  Milev}]{mak2017default}
Mak, L.~E., Minuzzi, L., MacQueen, G., Hall, G., Kennedy, S.~H., and Milev, R.
  (2017).
\newblock The default mode network in healthy individuals: a systematic review
  and meta-analysis.
\newblock \emph{Brain connectivity} 7, 25--33
\bibAnnoteFile{mak2017default}

\bibitem[{Micheloyannis et~al.(2006)Micheloyannis, Pachou, Stam, Breakspear,
  Bitsios, Vourkas et~al.}]{micheloyannis2006small}
Micheloyannis, S., Pachou, E., Stam, C.~J., Breakspear, M., Bitsios, P.,
  Vourkas, M., et~al. (2006).
\newblock Small-world networks and disturbed functional connectivity in
  schizophrenia.
\newblock \emph{Schizophrenia research} 87, 60--66
\bibAnnoteFile{micheloyannis2006small}

\bibitem[{Ortega et~al.(2018)Ortega, Frossard, Kova{\v{c}}evi{\'c}, Moura, and
  Vandergheynst}]{ortega2018graph}
Ortega, A., Frossard, P., Kova{\v{c}}evi{\'c}, J., Moura, J.~M., and
  Vandergheynst, P. (2018).
\newblock Graph signal processing: Overview, challenges, and applications.
\newblock \emph{Proceedings of the IEEE} 106, 808--828
\bibAnnoteFile{ortega2018graph}

\bibitem[{Parisot et~al.(2018)Parisot, Ktena, Ferrante, Lee, Guerrero, Glocker
  et~al.}]{parisot2018disease}
Parisot, S., Ktena, S.~I., Ferrante, E., Lee, M., Guerrero, R., Glocker, B.,
  et~al. (2018).
\newblock Disease prediction using graph convolutional networks: Application to
  autism spectrum disorder and alzheimer’s disease.
\newblock \emph{Medical image analysis} 48, 117--130
\bibAnnoteFile{parisot2018disease}

\bibitem[{Parisot et~al.(2017)Parisot, Ktena, Ferrante, Lee, Moreno, Glocker
  et~al.}]{parisot2017spectral}
Parisot, S., Ktena, S.~I., Ferrante, E., Lee, M., Moreno, R.~G., Glocker, B.,
  et~al. (2017).
\newblock Spectral graph convolutions for population-based disease prediction.
\newblock In \emph{International Conference on Medical Image Computing and
  Computer-Assisted Intervention} (Springer), 177--185
\bibAnnoteFile{parisot2017spectral}

\bibitem[{Rousseeuw(1987)}]{rousseeuw1987silhouettes}
Rousseeuw, P.~J. (1987).
\newblock Silhouettes: a graphical aid to the interpretation and validation of
  cluster analysis.
\newblock \emph{Journal of computational and applied mathematics} 20, 53--65
\bibAnnoteFile{rousseeuw1987silhouettes}

\bibitem[{Salimi-Khorshidi et~al.(2014)Salimi-Khorshidi, Douaud, Beckmann,
  Glasser, Griffanti, and Smith}]{salimi2014automatic}
Salimi-Khorshidi, G., Douaud, G., Beckmann, C.~F., Glasser, M.~F., Griffanti,
  L., and Smith, S.~M. (2014).
\newblock Automatic denoising of functional mri data: combining independent
  component analysis and hierarchical fusion of classifiers.
\newblock \emph{Neuroimage} 90, 449--468
\bibAnnoteFile{salimi2014automatic}

\bibitem[{Schaefer et~al.(2017)Schaefer, Kong, Gordon, Laumann, Zuo, Holmes
  et~al.}]{schaefer2017local}
Schaefer, A., Kong, R., Gordon, E.~M., Laumann, T.~O., Zuo, X.-N., Holmes,
  A.~J., et~al. (2017).
\newblock Local-global parcellation of the human cerebral cortex from intrinsic
  functional connectivity mri.
\newblock \emph{Cerebral Cortex} 28, 3095--3114
\bibAnnoteFile{schaefer2017local}

\bibitem[{Selvaraju et~al.(2017)Selvaraju, Cogswell, Das, Vedantam, Parikh, and
  Batra}]{selvaraju2017grad}
Selvaraju, R.~R., Cogswell, M., Das, A., Vedantam, R., Parikh, D., and Batra,
  D. (2017).
\newblock Grad-cam: Visual explanations from deep networks via gradient-based
  localization.
\newblock In \emph{Proceedings of the IEEE international conference on computer
  vision}. 618--626
\bibAnnoteFile{selvaraju2017grad}

\bibitem[{Shervashidze et~al.(2011)Shervashidze, Schweitzer, Leeuwen, Mehlhorn,
  and Borgwardt}]{shervashidze2011weisfeiler}
Shervashidze, N., Schweitzer, P., Leeuwen, E. J.~v., Mehlhorn, K., and
  Borgwardt, K.~M. (2011).
\newblock Weisfeiler-{L}ehman graph kernels.
\newblock \emph{Journal of Machine Learning Research} 12, 2539--2561
\bibAnnoteFile{shervashidze2011weisfeiler}

\bibitem[{Shuman et~al.(2013)Shuman, Narang, Frossard, Ortega, and
  Vandergheynst}]{shuman2013emerging}
Shuman, D.~I., Narang, S.~K., Frossard, P., Ortega, A., and Vandergheynst, P.
  (2013).
\newblock The emerging field of signal processing on graphs: Extending
  high-dimensional data analysis to networks and other irregular domains.
\newblock \emph{IEEE signal processing magazine} 30, 83--98
\bibAnnoteFile{shuman2013emerging}

\bibitem[{Sporns(2018)}]{sporns2018graph}
Sporns, O. (2018).
\newblock Graph theory methods: applications in brain networks.
\newblock \emph{Dialogues in clinical neuroscience} 20, 111
\bibAnnoteFile{sporns2018graph}

\bibitem[{Thomas~Yeo et~al.(2011)Thomas~Yeo, Krienen, Sepulcre, Sabuncu,
  Lashkari, Hollinshead et~al.}]{thomas2011organization}
Thomas~Yeo, B., Krienen, F.~M., Sepulcre, J., Sabuncu, M.~R., Lashkari, D.,
  Hollinshead, M., et~al. (2011).
\newblock The organization of the human cerebral cortex estimated by intrinsic
  functional connectivity.
\newblock \emph{Journal of neurophysiology} 106, 1125--1165
\bibAnnoteFile{thomas2011organization}

\bibitem[{Tian et~al.(2011)Tian, Wang, Yan, and He}]{tian2011hemisphere}
Tian, L., Wang, J., Yan, C., and He, Y. (2011).
\newblock Hemisphere-and gender-related differences in small-world brain
  networks: a resting-state functional mri study.
\newblock \emph{Neuroimage} 54, 191--202
\bibAnnoteFile{tian2011hemisphere}

\bibitem[{Van~Essen et~al.(2013)Van~Essen, Smith, Barch, Behrens, Yacoub,
  Ugurbil et~al.}]{van2013wu}
Van~Essen, D.~C., Smith, S.~M., Barch, D.~M., Behrens, T.~E., Yacoub, E.,
  Ugurbil, K., et~al. (2013).
\newblock The wu-minn human connectome project: an overview.
\newblock \emph{Neuroimage} 80, 62--79
\bibAnnoteFile{van2013wu}

\bibitem[{Varoquaux et~al.(2017)Varoquaux, Raamana, Engemann, Hoyos-Idrobo,
  Schwartz, and Thirion}]{varoquaux2017assessing}
Varoquaux, G., Raamana, P.~R., Engemann, D.~A., Hoyos-Idrobo, A., Schwartz, Y.,
  and Thirion, B. (2017).
\newblock Assessing and tuning brain decoders: cross-validation, caveats, and
  guidelines.
\newblock \emph{NeuroImage} 145, 166--179
\bibAnnoteFile{varoquaux2017assessing}

\bibitem[{Veli{\v{c}}kovi{\'c} et~al.(2018)Veli{\v{c}}kovi{\'c}, Fedus,
  Hamilton, Li{\`o}, Bengio, and Hjelm}]{velivckovic2018deep}
Veli{\v{c}}kovi{\'c}, P., Fedus, W., Hamilton, W.~L., Li{\`o}, P., Bengio, Y.,
  and Hjelm, R.~D. (2018).
\newblock Deep graph infomax.
\newblock \emph{arXiv preprint arXiv:1809.10341}
\bibAnnoteFile{velivckovic2018deep}

\bibitem[{Wang et~al.(2015)Wang, Wang, Xia, Liao, Evans, and
  He}]{wang2015gretna}
Wang, J., Wang, X., Xia, M., Liao, X., Evans, A., and He, Y. (2015).
\newblock Gretna: a graph theoretical network analysis toolbox for imaging
  connectomics.
\newblock \emph{Frontiers in human neuroscience} 9, 386
\bibAnnoteFile{wang2015gretna}

\bibitem[{Wang et~al.(2010)Wang, Zuo, and He}]{wang2010graph}
Wang, J., Zuo, X., and He, Y. (2010).
\newblock Graph-based network analysis of resting-state functional mri.
\newblock \emph{Frontiers in systems neuroscience} 4, 16
\bibAnnoteFile{wang2010graph}

\bibitem[{Weis et~al.(2019)Weis, Patil, Hoffstaedter, Nostro, Yeo, and
  Eickhoff}]{weis2019sex}
Weis, S., Patil, K.~R., Hoffstaedter, F., Nostro, A., Yeo, B. T.~T., and
  Eickhoff, S.~B. (2019).
\newblock {Sex Classification by Resting State Brain Connectivity}.
\newblock \emph{Cerebral Cortex} \doi{10.1093/cercor/bhz129}.
\newblock Bhz129
\bibAnnoteFile{weis2019sex}

\bibitem[{Weisfeiler and Lehman(1968)}]{weisfeiler1968reduction}
Weisfeiler, B. and Lehman, A.~A. (1968).
\newblock A reduction of a graph to a canonical form and an algebra arising
  during this reduction.
\newblock \emph{Nauchno-Technicheskaya Informatsia} 2, 12--16
\bibAnnoteFile{weisfeiler1968reduction}

\bibitem[{Wu et~al.(2019)Wu, Pan, Chen, Long, Zhang, and
  Yu}]{wu2019comprehensive}
Wu, Z., Pan, S., Chen, F., Long, G., Zhang, C., and Yu, P.~S. (2019).
\newblock A comprehensive survey on graph neural networks.
\newblock \emph{arXiv preprint arXiv:1901.00596}
\bibAnnoteFile{wu2019comprehensive}

\bibitem[{Xu et~al.(2015)Xu, Li, Wu, Wu, Hu, Zhu et~al.}]{xu2015gender}
Xu, C., Li, C., Wu, H., Wu, Y., Hu, S., Zhu, Y., et~al. (2015).
\newblock Gender differences in cerebral regional homogeneity of adult healthy
  volunteers: a resting-state fmri study.
\newblock \emph{BioMed research international} 2015
\bibAnnoteFile{xu2015gender}

\bibitem[{Xu et~al.(2018{\natexlab{a}})Xu, Hu, Leskovec, and
  Jegelka}]{xu2018powerful}
Xu, K., Hu, W., Leskovec, J., and Jegelka, S. (2018{\natexlab{a}}).
\newblock How powerful are graph neural networks?
\newblock \emph{arXiv preprint arXiv:1810.00826}
\bibAnnoteFile{xu2018powerful}

\bibitem[{Xu et~al.(2018{\natexlab{b}})Xu, Li, Tian, Sonobe, Kawarabayashi, and
  Jegelka}]{xu2018representation}
Xu, K., Li, C., Tian, Y., Sonobe, T., Kawarabayashi, K.-i., and Jegelka, S.
  (2018{\natexlab{b}}).
\newblock Representation learning on graphs with jumping knowledge networks.
\newblock \emph{arXiv preprint arXiv:1806.03536}
\bibAnnoteFile{xu2018representation}

\bibitem[{Ye and Sung(2019)}]{ye2019understanding}
Ye, J.~C. and Sung, W.~K. (2019).
\newblock Understanding geometry of encoder-decoder {CNNs}.
\newblock In \emph{International Conference on Machine Learning}. 7064--7073
\bibAnnoteFile{ye2019understanding}

\bibitem[{Zhang et~al.(2018)Zhang, Dougherty, Baum, White, and
  Michael}]{zhang2018functional}
Zhang, C., Dougherty, C.~C., Baum, S.~A., White, T., and Michael, A.~M. (2018).
\newblock Functional connectivity predicts gender: evidence for gender
  differences in resting brain connectivity.
\newblock \emph{Human brain mapping} 39, 1765--1776
\bibAnnoteFile{zhang2018functional}

\bibitem[{Zhou et~al.(2016)Zhou, Khosla, Lapedriza, Oliva, and
  Torralba}]{zhou2016learning}
Zhou, B., Khosla, A., Lapedriza, A., Oliva, A., and Torralba, A. (2016).
\newblock Learning deep features for discriminative localization.
\newblock In \emph{Proceedings of the IEEE conference on computer vision and
  pattern recognition}. 2921--2929
\bibAnnoteFile{zhou2016learning}

\end{thebibliography}
%\bibliographystyle{frontiersinHLTH&FPHY} % for Health, Physics and Mathematics articles

%%% Make sure to upload the bib file along with the tex file and PDF
%%% Please see the test.bib file for some examples of references

%%% Please be aware that for original research articles we only permit a combined number of 15 figures and tables, one figure with multiple subfigures will count as only one figure.
%%% Use this if adding the figures directly in the mansucript, if so, please remember to also upload the files when submitting your article
%%% There is no need for adding the file termination, as long as you indicate where the file is saved. In the examples below the files (logo1.eps and logos.eps) are in the Frontiers LaTeX folder
%%% If using *.tif files convert them to .jpg or .png
%%%  NB logo1.eps is required in the path in order to correctly compile front page header %%%

%%% If you are submitting a figure with subfigures please combine these into one image file with part labels integrated.
%%% If you don't add the figures in the LaTeX files, please upload them when submitting the article.
%%% Frontiers will add the figures at the end of the provisional pdf automatically
%%% The use of LaTeX coding to draw Diagrams/Figures/Structures should be avoided. They should be external callouts including graphics.
\begin{landscape}
\pagebreak
\section*{Tables and Figures}
\begin{table}[ht]
\renewcommand{\arraystretch}{1.3}
\caption{Comparison of various methods for sex classification with the HCP dataset}
\label{table:comparison}
\centering
\begin{tabular}{c | c c c | c c c c c}
\hline
\hline
 Model         & Accruacy (\%) & Precision (\%) & Recall (\%) & Subjects & Parcellation & Validation & Author   & Year \\
\hline
GIN + Infomax & \textbf{84.61 $\pm$ 2.9} & 86.19 $\pm$ 3.3 & 86.81 $\pm$ 4.9 & 942 & Schaefer 400 & 10-fold & Ours  & 2020 \\
GIN           & 84.41 $\pm$ 2.8 & 85.39 $\pm$ 2.6 & 87.60 $\pm$ 7.5 & 942 & Schaefer 400 & 10-fold & Ours & 2020 \\
SVM-RBF & 68.7 $\pm$ 2.6 & - & - & 434 &  Schaefer 400 + Fan 39 & 10-fold & Weis et al.  & 2019 \\
SVM-RBF & 64.3 $\pm$ 2.6 & - & - & 310 &  Schaefer 400 + Fan 39 & Separate & Weis et al. & 2019\\
GCN* (baseline) & 83.98 $\pm$ 3.2 & 84.59 $\pm$ 3.1 & 87.78 $\pm$ 6.4 & 942 &  Schaefer 400 & 10-fold & Arslan et al.  & 2018 \\
PLS & 79.9 $\pm$ 0.9 & - & - & 820 & Dosenbach 160 & 10-fold & Zhang et al.  & 2018\\
\hline
\hline
\multicolumn{3}{l}{* Re-implemented to test for the HCP dataset.}
\end{tabular}
\end{table}

\begin{table}[ht]
\renewcommand{\arraystretch}{1.3}
\caption{Hyperparameter tuning experiments}
\label{table:hyperparameter}
\centering
\begin{tabular}{c|c c c c c | c c c}
\hline
\hline
Model      & $\lambda$   & Dropout & Learning rate   & Layers & Hidden units & Accuracy (\%) & Precision (\%) & Recall (\%) \\
\hline
GCN (Baseline) & None & 0.5 (2,4,5 layer)  & 0.005 & 5 & 32/32/64/64/128 & 83.98 $\pm$ 3.2 & 84.59 $\pm$ 3.1 & 87.78 $\pm$ 6.4\\
\hline
GIN+Infomax           & 0.05 & 0.5   & 0.005 & 5  & 64 & \textbf{84.61 $\pm$ 2.9} & 86.19 $\pm$ 3.3 & 86.81 $\pm$ 4.9\\
GIN                   & 0.0  &  -    & -     & -  & -  & 84.41 $\pm$ 2.8 & 85.39 $\pm$ 2.6 & 87.60 $\pm$ 7.5\\
\hline
-                     & 0.01 &  -    & -     & -  & -  & 84.08 $\pm$ 2.2 & 86.72 $\pm$ 4.4 & 85.31 $\pm$ 5.5\\
-                     & 0.1  &  -    & -     & -  & -  & 84.51 $\pm$ 2.1 & 86.85 $\pm$ 4.5 & 86.06 $\pm$ 5.5\\
-                     & -    & 0.0   & -     & -  & -  & 83.99 $\pm$ 3.4 & 85.78 $\pm$ 4.4 & 86.26 $\pm$ 6.1\\
-                     & -    &  -    & 0.01  & -  & -  & 83.13 $\pm$ 3.4 & 85.89 $\pm$ 3.4 & 84.01 $\pm$ 5.2\\
-                     & -    &  -    & 0.001 & -  & -  & 81.54 $\pm$ 3.3 & 85.45 $\pm$ 3.4 & 81.37 $\pm$ 7.3\\
-                     & -    &  -    & -     & 4  & -  & 83.11 $\pm$ 3.2 & 84.62 $\pm$ 2.8 & 85.70 $\pm$ 4.2\\
-                     & -    &  -    & -     & -  & 32 & 83.13 $\pm$ 3.4 & 85.20 $\pm$ 4.3 & 85.14 $\pm$ 5.5\\
\hline
\hline
\end{tabular}
\end{table}
\end{landscape}

\begin{table}[ht]
\renewcommand{\arraystretch}{1.3}
\caption{Comparison of different node feature embeddings}
\label{table:embedding}
\begin{center}
  \begin{tabular}{c | c c c }
  \hline
  \hline
  Node feature  & Accuracy (\%)  & Precision (\%) & Recall (\%) \\
  \hline
  One-hot     & 84.61 $\pm$ 2.9  & 86.19 $\pm$ 3.3  & 86.81 $\pm$ 4.9 \\
  BOLD mean   & 67.73 $\pm$ 2.9  & 69.90 $\pm$ 4.1  & 76.46 $\pm$ 8.3 \\
  Coordinate  & 72.19 $\pm$ 4.4  & 76.06 $\pm$ 6.8  & 75.88 $\pm$ 7.2 \\
  Coordinate* & 70.90 $\pm$ 4.1  & 72.94 $\pm$ 4.9  & 78.33 $\pm$ 8.6 \\
  \hline
  \hline
  \multicolumn{4}{l}{* Dimension matched to one-hot}

  \end{tabular}
\end{center}
\end{table}

\begin{table}[ht]
\renewcommand{\arraystretch}{1.3}
\caption{Robustness of the saliency mapping methods}
\label{table:robustness}
\begin{center}
  \begin{tabular}{c | c c | c c }
  \hline
  \hline
  Method  & \multicolumn{2}{c|}{Proposed} & \multicolumn{2}{c}{CAM}  \\
  \hline
          & One-fold & Five-folds & One-fold & Five-folds \\
  \hline
  Female  & 63.5$\pm$6.7 \% & 92.5$\pm$2.5 \% & 46.5$\pm$16.9 \% & 70.0$\pm$5.0 \% \\
  Male    & 65.5$\pm$5.2 \% & 87.5$\pm$7.5 \% & 62.0$\pm$25.4 \% & 92.5$\pm$2.5 \% \\
  \hline
  \hline
  \end{tabular}
\end{center}
\end{table}

\begin{landscape}
\begin{table}[ht]
\renewcommand{\arraystretch}{1.3}
\caption{Top 5-percentile salient regions identified by the proposed method for the female and the male class}
\label{table:saliency_proposed}
\begin{center}
  \begin{tabular}{c c c c c c c | c c c c c c c}
  \hline
  \hline
  \multicolumn{7}{c|}{{\em Female}} &   \multicolumn{7}{c}{{\em Male}} \\
  \hline
  Side   & Region & Network    & R & A & S  & Value  &   Side   & Region & Network    & R & A & S  & Value  \\
  \hline
  L. & Somatomotor area & SMN & -8 & -42 & 70 & 1.000 &   R. & Medial PFC & DMN & 10 & 66 & 0 & 1.000\\

  R. & Somatomotor area & SMN & 64 & -34 & 10 & 0.968 &   L. & Somatomotor area & SMN & -48 & -12 & 14 & 0.986\\

  L. & Visual cortex & VN & -18 & -64 & 6 & 0.951 &   R. & PCC & DMN & 8 & -44 & 20 & 0.985\\

  R. & Medial PFC & DMN & 8 & 54 & 12 & 0.931 &   L. & Somatomotor area & SMN & -58 & -36 & 16 & 0.976\\

  R. & Visual cortex & VN & 4 & -80 & 24 & 0.909 &   R. & Cingulate cortex & SVN & 6 & 10 & 58 & 0.973\\

  R. & Orbitofrontal cortex & LN & 20 & 42 & -18 & 0.887 &   L. & PCC & DMN & -4 & -54 & 20 & 0.960\\

  L. & PFC & DMN & -6 & 34 & 20 & 0.863  &   R. & Temporal lobe & DMN & 48 & 16 & -20 & 0.951\\

  L. & PFC & DMN & -22 & 20 & 52 & 0.835 &   L. & Cingulate cortex & SVN & -6 & -48 & 56 & 0.949\\

  L. & PFC & DMN & -36 & 36 & -12 & 0.835 &   R. & Somatomotor area & SMN & 12 & -18 & 42 & 0.935\\

  R. & Somatomotor area & SMN & 6 & -22 & 72 & 0.832 &   L. & PFC & DMN & -14 & 58 & 30 & 0.932\\

  L. & Temporal lobe & DMN & -40 & -78 & 30 & 0.821 &  R. & Cingulate cortex & SVN & 16 & 6 & 70 & 0.908\\

  L. & Parietal lobe & CCN & -44 & -42 & 46 & 0.816 &  R. & Visual cortex & VN & 24 & -74 & -10 & 0.899\\

  L. & Somatomotor area & SMN & -52 & -6 & 44 & 0.814 &  R. & Lateral PFC & CCN & 44 & 18 & 44 & 0.881\\

  L. & Frontal operculum & SVN & -52 & 8 & 14 & 0.806 &  L. & Orbitofrontal cortex & LN & -16 & 64 & -8 & 0.881\\

  L. & Frontal operculum & SVN & -44 & 6 & -16 & 0.806 &  L. & PFC & DMN & -18 & 36 & 48 & 0.854\\

  L. & Temporal pole & LN & -54 & -22 & -30 & 0.804 &  R. & Intraparietal sulcus & DAN & 8 & -72 & 52 & 0.843\\

  L. & PFC & DMN & -8 & 42 & 52 & 0.799 &  L. & Temporal pole & LN & -26 & -10 & -32 & 0.835\\

  R. & Somatomotor area & SMN & 40 & -20 & 4 & 0.784 &  R. & Visual cortex & VN & 36 & -88 & 2 & 0.835\\

  R. & Cingulate cortex & SVN & 6 & -2 & 66 & 0.781 &  L. & PCC & DMN & -6 & -40 & 24 & 0.834\\

  L. & Frontal eye field & DAN & -26 & 0 & 56 & 0.777 &  R. & Precuneus & CCN & 14 & -72 & 40 & 0.834\\
  \hline
  \hline
  \end{tabular}
\end{center}
\end{table}

\begin{table}[ht]
\renewcommand{\arraystretch}{1.3}
\caption{Top 5-percentile salient regions identified by the CAM-based method for the female and the male class}
\label{table:saliency_cam}
\begin{center}
  \begin{tabular}{c c c c c c c | c c c c c c c}
  \hline
  \hline
  \multicolumn{7}{c|}{{\em Female}} &   \multicolumn{7}{c}{{\em Male}} \\
  \hline
  Side   & Region & Network    & R & A & S  & Value  & Side   & Region & Network    & R & A & S  & Value      \\
  \hline
  L. & PFC & DMN & -30 & 14 & 58 & 1.000 &   L. & PCC & DMN & -8 & -52 & 10 & 1.000\\

  L. & PFC & DMN & -8 & 42 & 52 & 0.965 &   R. & Parietal lobe & DMN & 54 & -46 & 20 & 0.983\\

  L. & PFC & DMN & -42 & 8 & 48 & 0.964 &   L. & PCC & DMN & -14 & -60 & 18 & 0.970\\

  L. & PFC & DMN & -22 & 20 & 52 & 0.944 &   R. & Parietal lobe & DMN & 48 & -64 & 22 & 0.961\\

  L. & Frontal operculum & SVN & -52 & 8 & 14 & 0.884 &   R. & Medial PFC & DMN & 26 & 34 & 38 & 0.953\\

  R. & Cingulate cortex & SVN & 6 & -2 & 66 & 0.880 &   R. & Parietal lobe & DMN & 56 & -46 & 32 & 0.941\\

  L. & Orbitofrontal cortex & LN & -12 & 24 & -20 & 0.870 &   R. & Medial PFC & CCN & 8 & 34 & 24 & 0.922\\

  L. & PFC & DMN & -22 & 50 & 32 & 0.852 &   R. & Parietal lobe & DMN & 54 & -54 & 26 & 0.920\\

  L. & Parietal lobe & CCN & -58 & -42 & 46 & 0.835 &   R. & Temporal lobe & DMN & 48 & 16 & -20 & 0.914\\

  L. & Parietal lobe & SVN & -62 & -24 & 32 & 0.819 &   L. & Temporal lobe & DMN & -60 & -36 & -18 & 0.865\\

  L. & Frontal operculum & SVN & -50 & 2 & 4 & 0.812 &   L. & Temporal lobe & DMN & -62 & -18 & -20 & 0.865\\

  L. & Temporal pole & LN & -24 & 6 & -40 & 0.804 &   L. & Temporal lobe & DMN & -60 & -34 & -4 & 0.858\\

  L. & Intraparietal sulcus & DAN & -14 & -50 & 72 & 0.798 &   L. & Temporal lobe & DMN & -52 & -22 & -6 & 0.853\\

  L. & Parietal lobe & CCN & -34 & -62 & 48 & 0.796 &  R. & Lateral PFC & CCN & 42 & 6 & 50 & 0.844\\

  R. & Frontal operculum & SVN & 54 & 12 & 12 & 0.784 &   R. & Temporal lobe & DMN & 50 & 8 & -32 & 0.816\\

  R. & Orbitofrontal cortex & LN & 14 & 24 & -20 & 0.780 &   L. & Temporal lobe & DMN & -58 & -48 & 16 & 0.811\\

  L. & Parietal lobe & CCN & -44 & -42 & 46 & 0.777 &   L. & PFC & DMN & -6 & 10 & 64 & 0.796\\

  R. & Medial PFC & DMN & 18 & 64 & 16 & 0.770 &   R. & Medial PFC & CCN & 4 & 28 & 48 & 0.789\\

  L. & PFC & DMN & -36 & 36 & -12 & 0.770 &   L. & Precuneus & CCN & -10 & -78 & 46 & 0.777\\

  R. & Medial PFC & DMN & 8 & 54 & 12 & 0.769 &   L. & PFC & DMN & -12 & 24 & 60 & 0.772\\

  \hline
  \hline
  \end{tabular}
\end{center}
\end{table}
\end{landscape}

\pagebreak

\begin{figure}[ht]
\begin{center}
\includegraphics[width=0.9\textwidth]{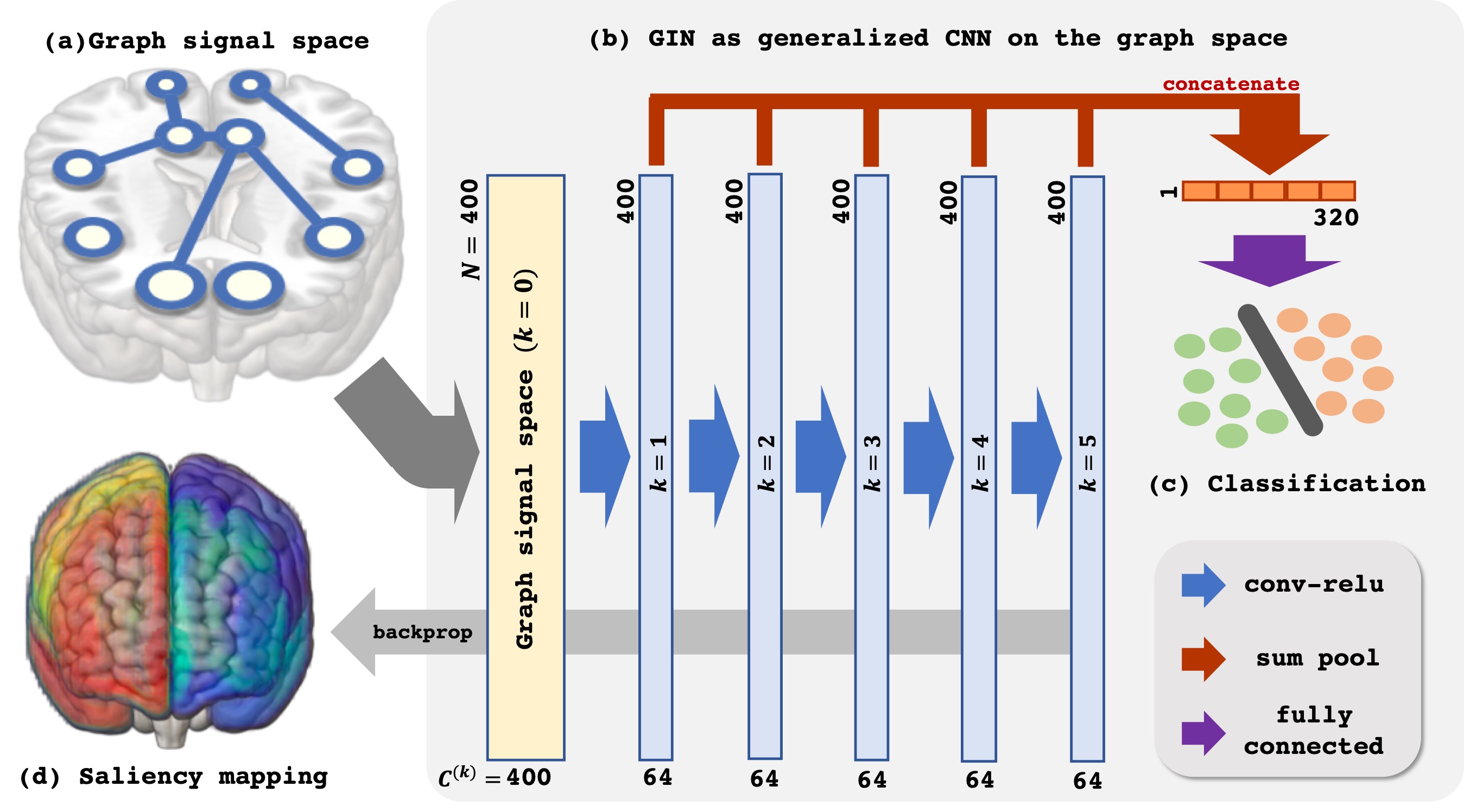}
\end{center}
\caption{Schematic illustration of the Graph Isomorphism Network based resting-state fMRI analysis.}
\label{fig:scheme}
\end{figure}

\begin{figure}[ht]
\begin{center}
\includegraphics[width=0.5\textwidth]{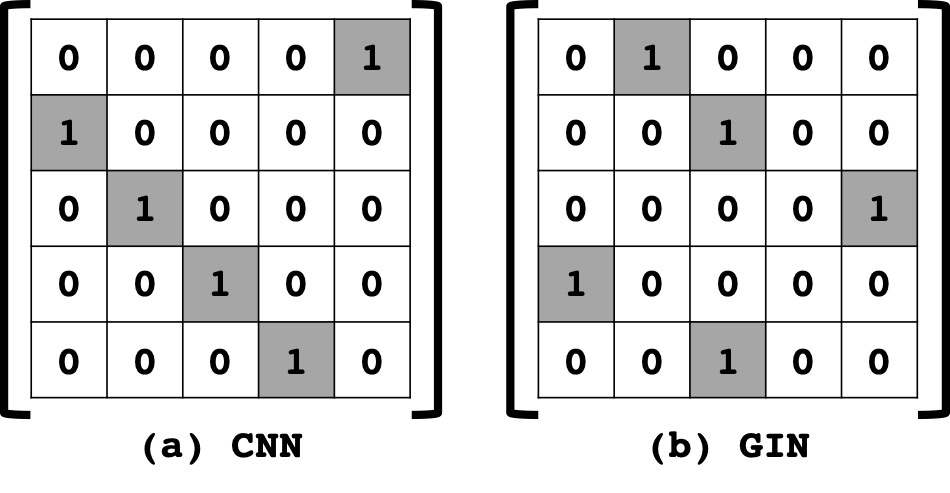}
\end{center}
\caption{Comparison of shift operation in (a) classical CNN, and (b) an example of GIN. In the graph space, the adjacency matrix is defined as shift operation.}
\label{fig:shift}
\end{figure}

\begin{figure}[ht]
\begin{center}
\includegraphics[width=1.0\textwidth,trim=4 4 4 4,clip]{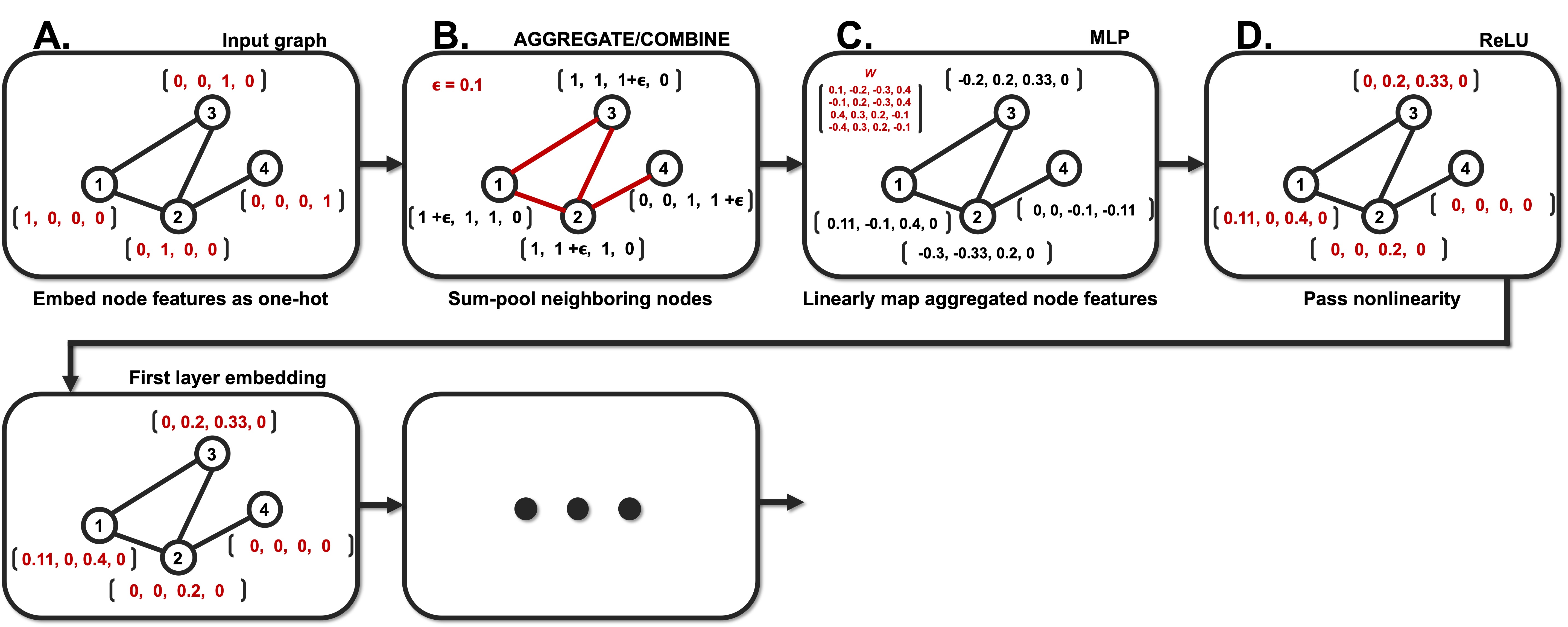}
\end{center}
\caption{Example of the GIN operation with a small graph ($N=4$).}
\label{fig:smallgraph_example}
\end{figure}

\begin{figure}[ht]
\begin{center}
\includegraphics[width=0.7\textwidth,trim=4 4 4 4,clip]{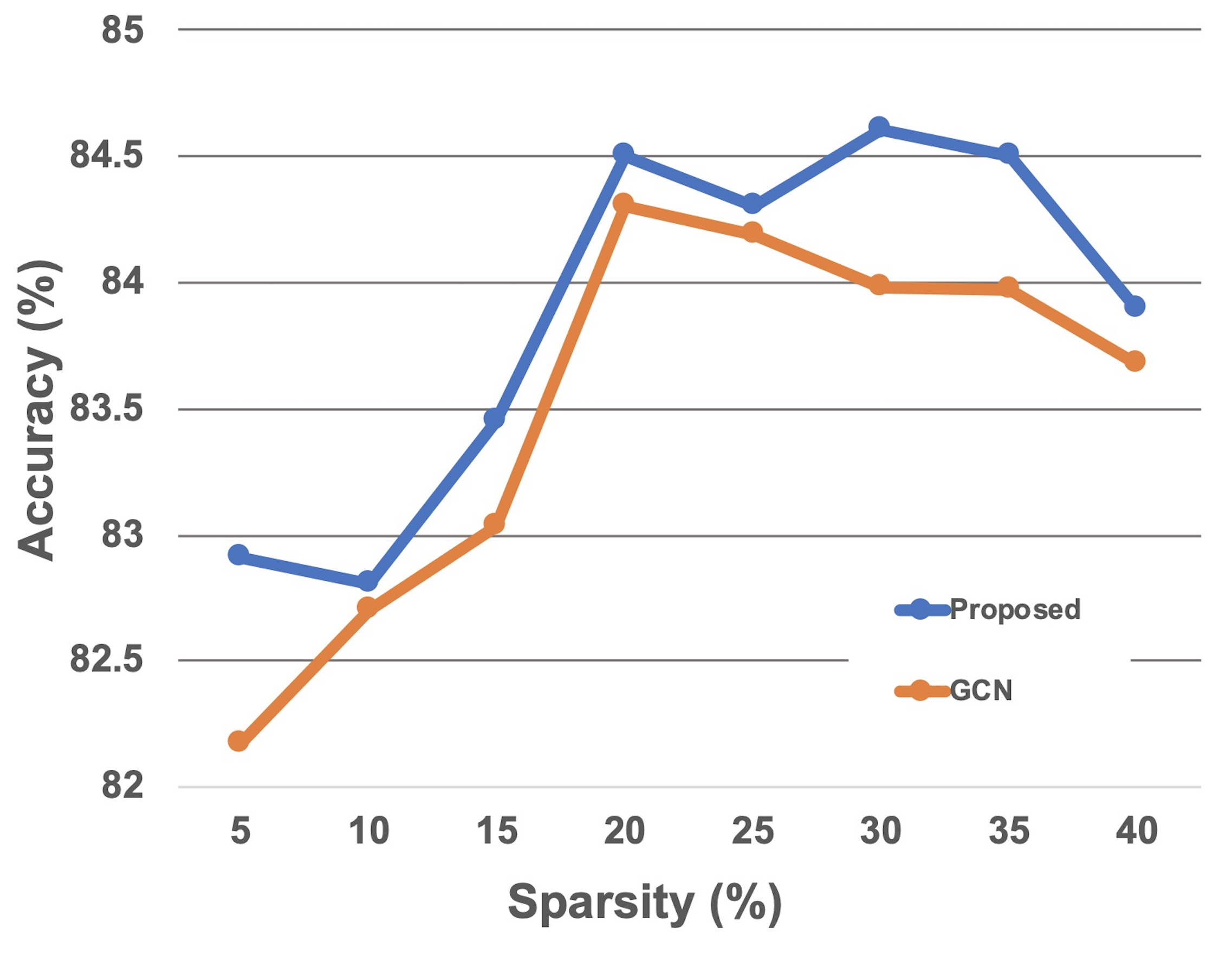}
\end{center}
\caption{Classification accuracy with respect to the edge sparsity.}
\label{fig:sparsity}
\end{figure}

\begin{figure}[ht]
\begin{center}
\includegraphics[width=0.8\textwidth,trim=4 4 4 4,clip]{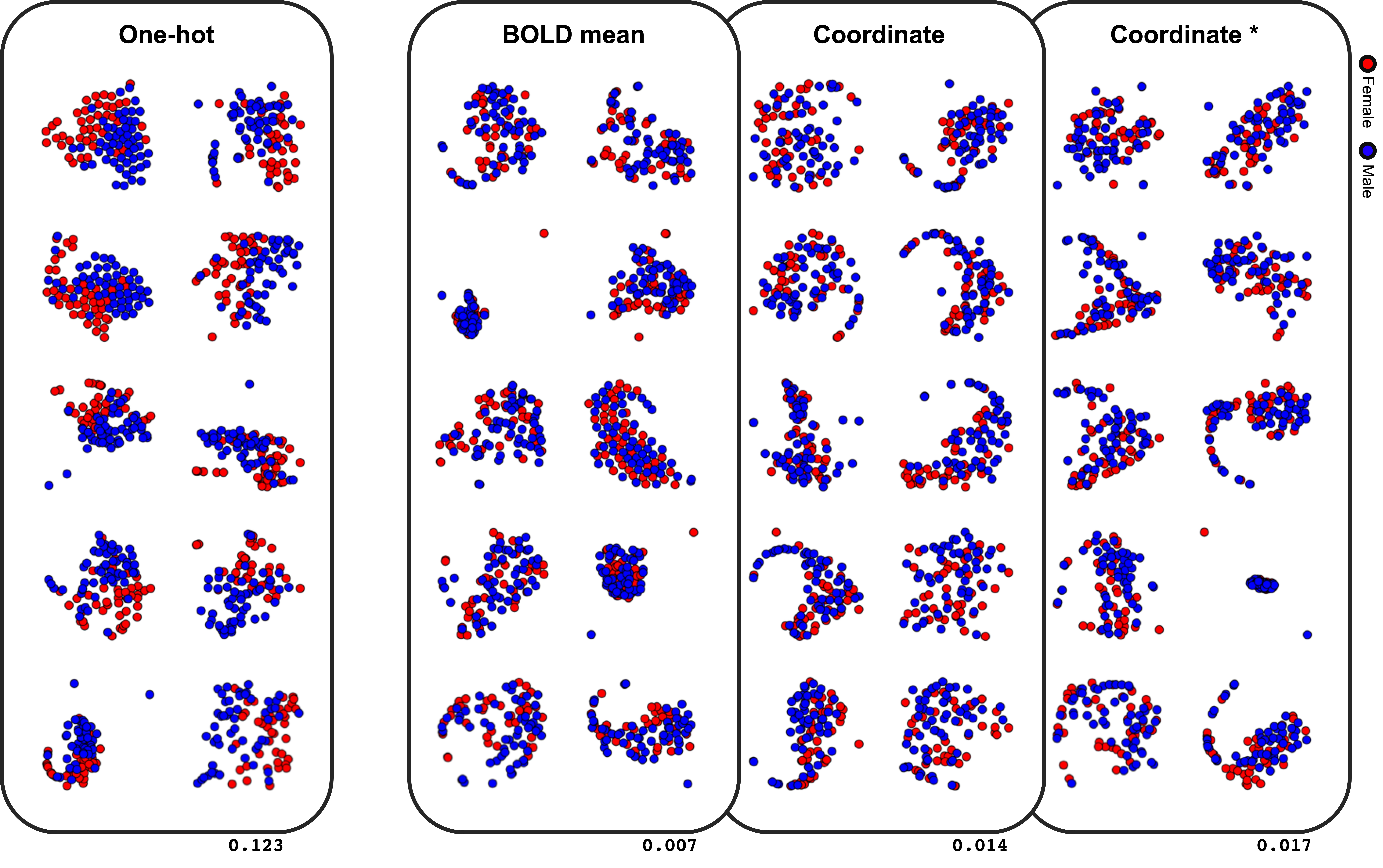}
\end{center}
\caption{Visualization of the latent space with t-SNE. Values on the lower right indicate the mean silhouette score of each embedding methods. Results of the ten fold cross-validation are plotted in separate spaces with perplexity 50. Asterisk indicate that the dimension is matched to the one-hot embedding.}
\label{fig:tsne}
\end{figure}

\begin{landscape}
\begin{figure}[ht]
\begin{center}
\includegraphics[width=1.35\textwidth,trim=4 4 4 4,clip]{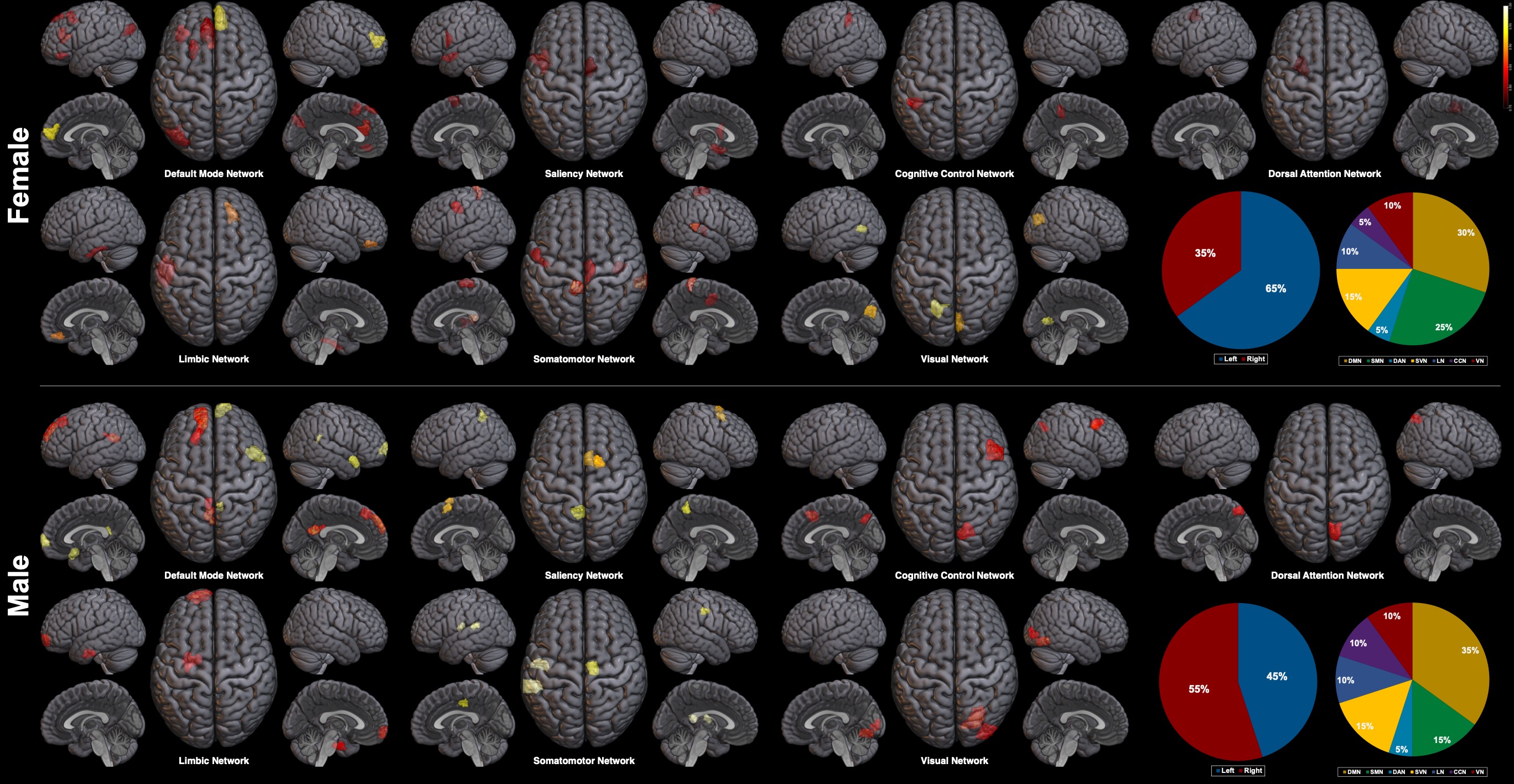}
\end{center}
\caption{Saliency mapping result of the proposed method. Top 20 salient regions are plotted with respect to the Yeo 7 networks \citep{thomas2011organization}. The pie charts indicate the ratio of the two hemispheres and the ratio of each networks across the salient regions.}
\label{fig:saliency_proposed}
\end{figure}
\end{landscape}

\begin{landscape}
\begin{figure}[ht]
\begin{center}
\includegraphics[width=1.35\textwidth,trim=4 4 4 4,clip]{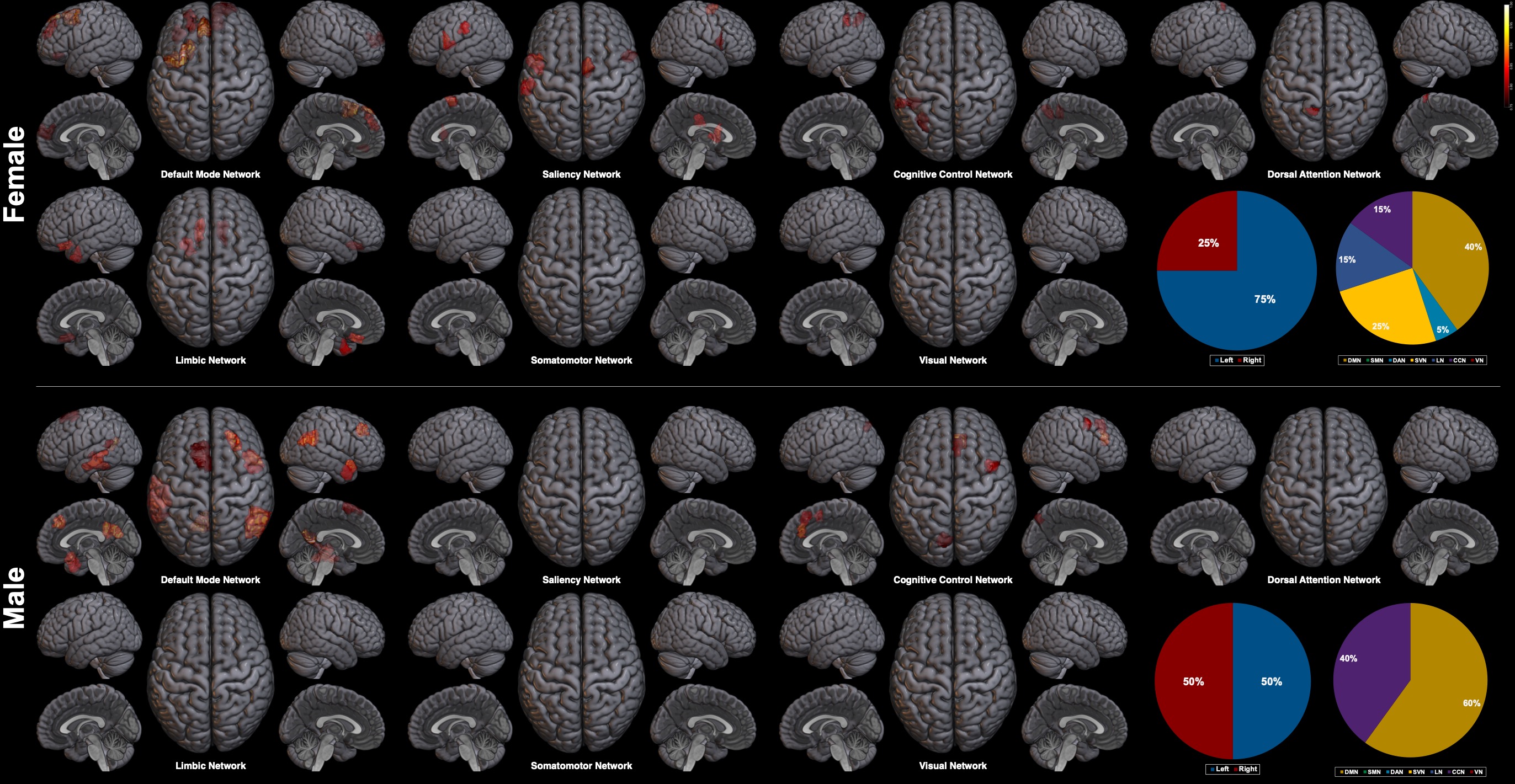}
\end{center}
\caption{Saliency mapping result of the CAM-based method. The pie charts indicate the ratio of the two hemispheres and the ratio of each networks across the salient regions.}
\label{fig:saliency_cam}
\end{figure}
\end{landscape}

\end{document}